\newif\ifanon
\newcommand{\authone}{Anonymous Author 1}
\newcommand{\authtwo}{Anonymous Author 2}
\newcommand{\deptone}{Undisclosed Department 1}
\newcommand{\depttwo}{Undisclosed Department 2}
\newcommand{\locone}{Undisclosed Location 1}
\newcommand{\loctwo}{Undisclosed Location 2}
\newcommand{\emailone}{anon1@anon.edu}
\newcommand{\emailtwo}{anon2@anon.edu}
\newcommand{\ackinfo}{Acknowledgements withheld for anonymity.}
\newcommand{\authone}{C.B. Scott}
\newcommand{\authtwo}{Eric Mjolsness}
\newcommand{\deptone}{Department of Computer Science}
\newcommand{\depttwo}{Departments of Computer Science and Mathematics}
\newcommand{\locone}{University of California, Irvine}
\newcommand{\loctwo}{University of California, Irvine}
\newcommand{\emailone}{scottcb@uci.edu}
\newcommand{\emailtwo}{emj@uci.edu}
\newcommand{\ackinfo}{
This work was supported by U.S. National Science Foundation NRT Award number 1633631, 
Human Frontiers Science Program grant HFSP - RGP0023/2018,
U.S. National Institute of Aging grant AG059602,
U.S. National Institutes for Health grant R01HD073179,
USAF/DARPA FA8750-14-C-0011,
and by the Leverhulme Trust and
and the hospitality of the Sainsbury Laboratory Cambridge University.}
\newcommand{\maketitletext}{Multilevel Artificial Neural Network Training for Spatially Correlated Learning}
\crefname{hypothesis}{Hypothesis}{Hypotheses}
\title{\maketitletext\thanks{Submitted to the editors 06/18/2018.
\funding{\ackinfo}
}}
\author{\authone\thanks{\deptone, \locone.
  (\email{\emailone}).}
\and \authtwo\thanks{\depttwo, \loctwo. 
  (\email{\emailtwo}).}
}
\newcommand*{\addFileDependency}[1]{
  \typeout{(#1)}
  \@addtofilelist{#1}
  \IfFileExists{#1}{}{\typeout{No file #1.}}
}
\newcommand*{\myexternaldocument}[1]{%
    \externaldocument{#1}%
    \addFileDependency{#1.tex}%
    \addFileDependency{#1.aux}%
}
\newcommand*{\tplotfactor}{1.0}
\newif\ifseptables
\newif\ifnotseptables
\begin{document}

\nocite{*}
\maketitle
%

\begin{abstract}
Multigrid modeling algorithms are a technique used to
accelerate iterative method models running on a hierarchy of similar 
graphlike
structures. We introduce and demonstrate a new method 
for
training neural networks which uses multilevel methods. 
Using an objective function derived from a graph-distance metric, we perform orthogonally-constrained optimization to find optimal prolongation and restriction maps between graphs. We compare and contrast several methods for performing this numerical optimization, and additionally present some new theoretical results on upper bounds of this type of objective function. 
Once calculated, these optimal maps between graphs form the core of Multiscale Artificial Neural Network (MsANN) training, a new procedure we present which simultaneously trains a hierarchy of neural network models of varying spatial resolution. Parameter information is passed between members of this hierarchy according to standard coarsening and refinement schedules from the multiscale modelling literature. In our machine learning experiments, these models are able to learn faster than training at the fine scale alone, achieving a comparable level of error with fewer weight updates (by an order of magnitude).
\end{abstract}

\begin{keywords}
  Multigrid Methods, Neural Networks, Classification, Image Analysis
\end{keywords}

\begin{AMS}
   46N10, 47N10, 65M55, 68T05, 82C32
\end{AMS}

\section{Motivation}
Multigrid methods (or multilevel methods when the underlying graph is not a grid) comprise a modeling framework that seeks to ameliorate a core problem in iterative method models with local update rules: namely, that these models have differing rates of convergence for fine-scale and coarse-scale modes \cite{yserentant1993old}. Because iteration of these models involves making updates of a given characteristic length, they are maximally efficient for propagating modes of approximately this wavelength, but ignore finer modes and are inefficient on coarser modes. Multigrid approaches gain computational benefit by addressing these multiple length-scales of behavior using multiple model resolutions, rather than attempting to address them all at the finest scale (in which the coarse modes converge slowly). These methods make use of ``prolongation'' and ``restriction'' operators to move between models in a hierarchy of scales. At each level, a ``smoothing'' step is performed - usually, for multilevel methods, a smoothing step consists of one pass of some iterative method for improving the model at that scale. 

In this paper, we describe a novel general algorithm for applying this approach to the training of Artificial Neural Networks (ANNs). In particular we demonstrate the efficiency of this new method, which combines ideas from machine learning and multilevel modelling, by training several hierarchies of Autoencoder networks (ANNs which learn a mapping from data to a lower-dimensional latent space) \cite{hinton2006reducing} \cite{bourlard1988auto}. By applying multilevel modeling methods we learn this latent representation with an order of magnitude less cost. We will make our notion of `cost' more precise in the experiments section, Section \ref{sec:netsec}.
\section{Background}

\subsection{Prior Work}
In this section, we discuss prior attempts 
to apply
ideas from multigrid methods to neural network models. Broadly speaking, prior approaches to neural net multigrid can be categorized into two classes: (1) Neural network models which are ``structurally multigrid'', i.e. are typical neural network models which make use of multiple scales of resolution; and (2) Neural network training processes which are hierarchical in some way, or use a coarsening-refinement procedure as part of the training process.

In the first class are approaches \cite{grais2017multi, ke2016neural, serban2017multiresolution}. Reference \cite{ke2016neural} implements a convolutional network in which convolutions make use of a multigrid-like structure similar to a Gaussian pyramid, with the motivation that the network will learn features at multiple scales of resolution.  Reference \cite{grais2017multi} defines a convolution operation, inspired by multigrid methods, that convolves at multiple levels of resolution simultaneously. Reference \cite{serban2017multiresolution} demonstrates a recurrent neural network model which similarly operates in multiple levels of some scale space; but in this paper the scale space is a space of aggregated language models (specifically, the differing scales are different levels of generality in language models - for example, topic models are coarsest, word models are finest, with document models somewhere in between). Common to all three of these approaches is that they make use of a modified neural net structure while leaving the training process unchanged, except that the network accepts multiresolution inputs. 

In contrast, multilevel neural network models \cite{bakshi1993wave, schroder2017parallelizing} in the second category present modified learning procedures which also use methodology similar to multilevel modeling. Reference \cite{bakshi1993wave} introduces a network which learns at coarse scales, and then gradually refines its decision making by increasing the resolution of the input space and learning ``corrections'' at each scale. However, that paper focuses on the capability of a particular family of basis functions for neural networks, and not on the capabilities of the multigrid approach. Reference \cite{schroder2017parallelizing} presents a reframing of the neural network training process as an evolution equation in time, and then applies a method called MGRIT (Multigrid Reduction in Time \cite{falgout2014parallel}) to achieve the same results as parallelizing over many runs of training. 

Our approach is fundamentally different: we use coarsened versions of the network model to make coarse updates to the weight variables of our model, followed by `smoothing steps' in which the fine-scale weights are refined. This approach is more general than any of \cite{grais2017multi, ke2016neural, serban2017multiresolution}, since it can be applied to any feed-forward network and is not tied to a particular network structure. The approach in \cite{schroder2017parallelizing} is to parallelize the training process by reframing it as a continuous-in-time evolution equation, but it still uses the same base model and therefore only learns at one spatial scale.

Our method is both structurally multilevel and learns using a multilevel training procedure. 
Our hierarchical neural network architecture is the first to learn at all spatial scales simultaneously over the course of training, transitioning between neural networks of varying input resolution according to standard multigrid method schedules of coarsening and refinement.
To our knowledge, this represents a fully novel approach to combining the powerful data analysis of neural networks with the model acceleration of multiscale modeling. 

\subsection{Outline}
Section \ref{sec:theory} covers the mathematical theory underlying our method. We first introduce the necessary definitions, which are then used in Subsection \ref{subsub:objective} to define an objective function which evaluates a map between two graphs in terms of how well it preserves the behavior of some local process operating on those graphs (interpreting the smaller of the two graphs as a coarsened version of the larger). In Subsection \ref{subsub:precomputeP} we examine some properties of this objective function, including presenting some projection matrices which are local optima for particular choices of graph structure and process. In Subsections \ref{subsec:msann_defn} and \ref{subsub:msann_training}, we define the \emph{Multiscale Artificial Neural Network (MsANN)}, a hierarchically-structured neural network model which uses these optimized projection matrices to project network parameters between levels of the hierarchy, resulting in more efficient training. In Section \ref{sec:netsec}, we demonstrate this efficiency by training a simple neural network model on a variety of datasets, comparing the cost of our approach to that of training only the finest network in the hierarchy. Finally, we conclude the paper by proving two novel properties of our objective function in Section \ref{sec:theoryshortsec}.

 \section{Theory}
 \label{sec:theory}
In this section, we first define basic terms used throughout the paper, and explain the core theory of our paper: that of optimal prolongation maps between 
computational processes running on graph-based data structures,
and hence between graphs.
In this paper we use a specific example of such a process, single-particle diffusion on graphs, to examine the behavior of these prolongation maps.
Finally, we discuss numerical methods for finding (given two input graphs $G_1$ and $G_2$, and a process) prolongation and restriction maps which minimize the error of using $G_1$ as a surrogate structure for simulating the behavior of that process on $G_2$. We will define more rigorously what we mean by ``process'', ``error'', and ``prolongation'' in Section \ref{subsub:objective}.
\subsection{Definitions}
\label{subsub:definitions}
In order to describe our objective function, we must first introduce some core concepts related to minimal mappings between graphs. 
\begin{itemize}
\item Graph lineage: A graph lineage is a sequence of graphs, indexed by $ l \in \mathbb{N} = 0, 1, 2, 3 \ldots$, satisfying the following:
\begin{itemize}
\item $G_0$ is the graph with one vertex and one self-loop, and;
\item Successive members of the lineage grow roughly exponentially - that is, the growth rate is $O(b^{l+\epsilon})$ for some $b > 1$, $\epsilon \geq 0$, and $l > 1$.
\end{itemize}
We introduce this term to differentiate this definition from that of a graph \emph{family}, which is a sequence of graphs without the growth condition. Most of the graph lineages we examine in this work are structurally similar - for example, the lineage of path graphs of length $2^l$. However, we do not define this similarity in a rigorous sense, and we do not require it in the definition of a lineage.
\item Graph Laplacian: We define the Laplacian matrix of a graph $G$ as $L(G) = A(G) - D(G)$, where $A(G)$ and $D(G)$ are the adjacency matrix and diagonal degree matrix of the graph, respectively. The eigenvalues of this matrix are referred to as the spectrum of $G$. See \cite{belkin2002laplacian, cvetkovic2010introduction} for more details on graph Laplacians and spectral graph theory. Our sign convention for $L$ agrees with the standard continuum Laplacian operator, $\Delta$, of a multivariate function $f$: $\Delta f = \sum_{i=1}^n \frac{\delta^2 f}{\delta x_i^2}$.
\item Kronecker Product and Sum of matrices: Given a $(k \times l)$ matrix $M$, and some other matrix $N$, the Kronecker product is the block matrix
\[
M \otimes N = \begin{bmatrix}
  m_{11} N & \cdots & m_{1l}\mathbf{N} \\
             \vdots & \ddots &           \vdots \\
  m_{k1} N & \cdots & m_{kl} \mathbf{N}
\end{bmatrix}
\]
If $M$ and $N$ are square, their Kronecker Sum is defined, and is given by 
\[M \oplus N = M \otimes I_{N} + I_{M} \otimes N\]
where we write $I_A$ to denote an identity matrix of the same size as $A$.
\item Box Product ($\Box$) of graphs: For $G_1$ with vertex set $U = \{ u_1 , u_2 \ldots \}$ and $G_2$ with vertex set $V = \{ v_1 , v_2 \ldots \}$, $G_1 \Box G_2$ is the graph with vertex set $U \times V$ and an edge between $(u_{i_1}, v_{j_1})$ and $(u_{i_2}, v_{j_2})$ when either of the following is true:
\begin{itemize}
\item $i_1 = i_2$ and $v_{j_1}$ and $v_{j_2}$ are adjacent in $G_2$, or
\item $j_1 = j_2$ and $u_{i_1}$ and $u_{i_2}$ are adjacent in $G_1$.
\end{itemize}
This may be rephrased in terms of the Kronecker Sum $\oplus$ of the two matrices:
\begin{align}
A(G_1 \Box G_2) = A(G_1) \oplus A(G_2) = A(G_1) \otimes I_{|G_2|} + I_{|G_1|} \otimes A(G_2)
\end{align}
\item Cross Product ($\times$) of graphs: For $G_1$ with vertex set $U = \{ u_1 , u_2 \ldots \}$ and $G_2$ with vertex set $V = \{ v_1 , v_2 \ldots \}$, $G_1 \times G_2$ is the graph with vertex set $U \times V$ and an edge between $(u_{i_1}, v_{j_1})$ and $(u_{i_2}, v_{j_2})$ when both of the following are true:
\begin{itemize}
\item $u_{i_1}$ and $u_{i_2}$ are adjacent in $G_1$, and
\item $v_{j_1}$ and $v_{j_2}$ are adjacent in $G_2$.
\end{itemize}
We include the standard pictorial illustration of the difference between these two graph products in Figure \ref{fig:graph_prod}.
\begin{figure}
    \begin{minipage}{.53\linewidth}
    \includegraphics[width=\linewidth]{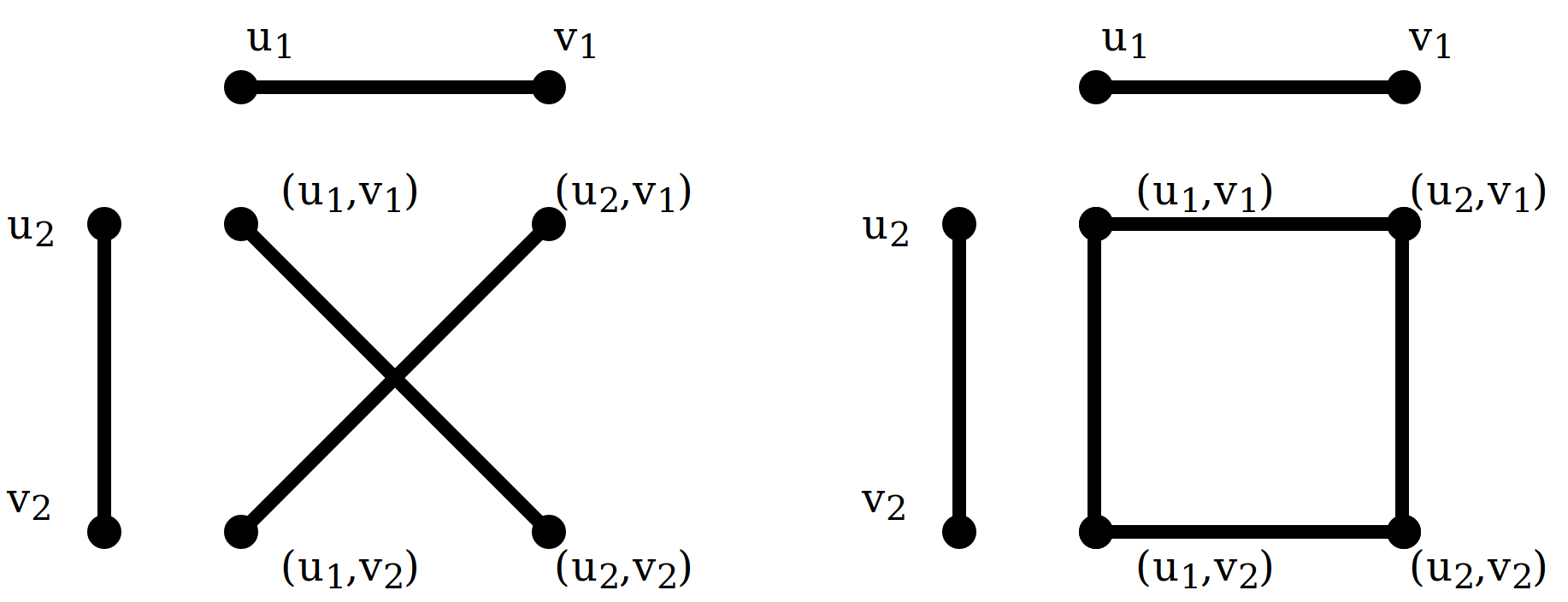}
    \end{minipage}
    \begin{minipage}{.45\linewidth}
        \caption{Two types of graph product: the Cross product ($G_1 \times G_2$, left) and Box product ($G_1 \Box G_2$, right). For two edges $v_1 \sim u_1 \in G_1$ and $v_2 \sim u_2 \in G_2$, we illustrate the resultant edges in the set of vertices $\{ (u_1, v_1), (u_2, v_1), (u_1, v_2), (u_2, v_2) \}$ in the graph product.}
        \label{fig:graph_prod}
    \end{minipage}
\end{figure}
\item Grid Graph: a grid graph (called a lattice graph or Hamming Graph in some texts \cite{brouwer2012distance}) is the distance-regular graph given by the box product of path graphs $P_{a_1}, P_{a_1}, \ldots P_{a_k}$ (yielding a grid with aperiodic boundary conditions) or by a similar list of cycle graphs (yielding a grid with periodic boundary conditions).
\item Prolongation map: A prolongation map between two graphs $G_1$ and $G_2$ of sizes $n_1$ and $n_2$, with $n_2 \geq n_1$, is an $n_2 \times n_1$ matrix of real numbers which is an optimum of the objective function of equation \ref{eqn:objfunction} below (possibly subject to some set of constraints $C(P)$). 
\item Eigenvalue matching: Given two matrices $A_1$ and $A_2$, and lists of their eigenvalues $\{\lambda^{(1)}_1, \lambda^{(1)}_2, \ldots, \lambda^{(1)}_{n_1} \}$ and $\{\lambda^{(2)}_1, \lambda^{(2)}_2, \ldots, \lambda^{(2)}_{n_2} \}$, with $n_2 \geq n_1$, we define the \emph{minimal eigenvalue matching} $m^*(A_1, A_2)$ as the matrix which is the solution of the following constrained optimization problem:
\begin{align}
\label{eqn:matchingconstraints}
  m^*(A_1, A_2) & =  \text{arg} \inf_M \sum_{i = 1}^{n_2} \sum_{j = 1}^{ n_1} M_{i, j} (\lambda^{(1)}_{j} - \lambda^{(2)}_{i})^2 \\
\text{subject to} &\quad  \left( M \in \{ 0, 1\}^{n_2 \times n_1} \right)  \wedge \left( \sum_{i = 1}^{n_2} M_{i,j} = 1 \right) \wedge  \left( \sum_{j = 1}^{ n_1} M_{i,j} \leq 1 \right) \nonumber
\end{align}
In the case of eigenvalues with multiplicity $> 1$, there may not be one unique such matrix, in which case we distinguish matrices with identical cost by the lexicographical ordering of their occupied indices and take $m^*(A_1, A_2)$ as the first of those with minimal cost. This matching problem is well-studied and efficient algorithms for solving it exist; we use a Python language implementation \cite{clapper2008munk} of a 1957 algorithm due to Munkres \cite{munkres1957algorithms}. Additionally, given a way to enumerate the minimal-cost matchings found as solutions to this eigenvalue matching problem, we can perform combinatorial optimization with respect to some other objective function $g$, in order to find optima of $g(P)$ subject to the constraint that $P$ is a minimal matching. 
\end{itemize}

\subsection{Optimal Prolongation Maps Between Graphs}
\subsubsection{Our objective function}
\label{subsub:objective}
Given two graphs $G_1$ and $G_2$, we find the optimal prolongation map between them as follows: We first calculate the graph Laplacians $L_1$ and $L_2$, as well as pairwise vertex Manhattan distance matrices (i.e. the matrix with $T_{i,j}$ the minimal number of graph edges between vertices $i$ and $j$ in the graph), $T_1$ and $T_2$, of each graph. 
Calculating these matrices may not be trivial for arbitrary dense graphs; for example, calculating the pairwise Manhattan distance of a graph with $m$ edges on $n$ vertices can be accomplished in $O(m + n \log n)$  by the Fibonacci heap version of Dijkstra's algorithm \cite{fredman1987fibonacci}. Additionally, in Section $\ref{sec:optresults}$ we discuss an optimization procedure which requires computing the eigenvalues of $L_i$ (which are referred to as the \emph{spectrum} of $G_i$). Computing graph spectra is a well studied problem; we direct the reader to \cite{cohen2018approximating, pan1999complexity}. In practice, all of the graph spectra computed for experiments in this paper took a negligible amount of time ($<$ 1s) on a modern consumer-grade laptop using the scipy.linalg package \cite{jones2001scipy}, which in turn uses LAPACK routines for Schur decomposition of the matrix \cite{lapack1999laug}.
The optimal map is defined as $P$ which minimizes the matrix function
\begin{align}
\label{eqn:objfunction}
& \inf_{P | C(P), \alpha > 0, \beta > 0} && E(P) \hfill & \\
= & \inf_{P | C(P), \alpha > 0, \beta > 0}  && \left[ (1-s) {\left| \left| \frac{1}{\sqrt{\alpha}} P L_1 - \sqrt{\alpha} L_2 P \right|\right|}^2_F \right.  & \text{``Diffusion Term"} \nonumber \\
& && \left.  + s {\left|\left|\frac{1}{\sqrt{\beta}} P T_1 - \sqrt{\beta} T_2 P \right| \right|}^2_F \right] & \text{``Locality Term"\footnotemark} \nonumber 
\end{align} \footnotetext{By this we mean the notion that neighborhoods of $G_1$ should be mapped to neighborhoods of $G_2$ and vice versa.}%
where $|| \cdot ||_F$ is the Frobenius norm, and $C(P)$ is a set of constraints on $P$ (in particular, we require $P^T P = I_{n_1}$, but could also impose other restrictions such as sparsity, regularity, and/or bandedness). The manifold of real-valued orthogonal $n_2 \times n_1$ matrices with $n_1 \leq n_2$ is known as the Stiefel manifold; minimization constrained to this manifold is a well-studied problem \cite{rapcsak2002minimization, turaga2008statistical}. This optimization problem can be thought of as measuring the agreement between processes on each graph, as mapped through $P$. The expression $P X_1 - X_2 P$ compares the end result of 
\begin{enumerate}
\item Advancing process $X_2$ forward in time on $G_2$ and then using $P$ to interpolate vertex states to the smaller graph, to:
\item Interpolating the initial state (the all-ones vector) using $P$ and then advancing process $X_1$ on $G_1$.
\end{enumerate}
Strictly speaking the above interpretation of our objective function does not apply to the Manhattan distance matrix T of a graph, since $T$ is not a valid time evolution operator and thus is not a valid choice for $X$. However, the objective function term containing $T$ may still be interpreted as comparing travel distance in one graph to travel distance in the other. That is, we are implicitly comparing the similarity of two ways of measuring the distance of two nodes $v_k$ and $v_l$ in $G_1$:
\begin{enumerate}
    \item The Manhattan distance, as defined above, and;
    \item $\sum_{i = 1}^{n_2} \sum_{j = 1}^{n_2} p_{ik} d_{G_2}(u_i, u_j) p_{jl}$, a sum of path distances in $G_2$ weighted by how strongly $v_k$ and $v_l$ are connected, through $P$, to the endpoints of those paths, $u_i$ and $u_j$.
\end{enumerate}

Parameters $\alpha$ and $\beta$ are rescaling parameters to compensate for different graph sizes; in other words, $P$ must only ensure that processes 1 and 2 above agree up to some multiplicative constant. In operator theory terminology, the Laplacian is a time evolution operator for the single particle diffusion equation: $L_i = A(G_i) - \text{diag}(1 \cdot A(G_i))$. This operator evolves the probability distribution of states of a single-particle diffusion process on a graph $G_i$ (but other processes could be used - for example, a chemical reaction network or multiple-particle diffusion). The process $L$ defines a probability-conserving Master Equation of nonequilibrium statistical mechanics $dp/dt = L \cdot p$ which has formal solution $p(t) = \exp{(t L)} \cdot p(0)$. Pre-multiplication by the prolongation matrix $P$ is clearly a linear operator i.e. linear transformation from $\mathbb{R}^{n_1}$ to $\mathbb{R}^{n_2}$. Thus, we are requiring $P$ which minimizes the degree to which the operator diagram

\begin{diagram}
L_1         &\rTo^{\Delta t}   & {L_1}'\\
\dTo_{P}  &           &\dTo_{P} &&&& (\text{Diagram }1) \\
L_2        &\rTo^{\Delta t}   & {L_2}'\\
\end{diagram}
fails to commute. $\Delta t$ of course refers to advancement in time. See \cite{johnson2015model}, Figure 1, for a more complete version of this commutative diagram for model reduction.

We thus include in our objective function terms with 1) graph diffusion and 2) graph locality as the underlying process matrices ($T$, the Manhattan distance matrix, cannot be considered a time evolution operator because it is not probability-preserving). Parameter $s$ adjusts the relative strength of these terms to each other; so we may find ``fully diffuse'' $P$ when $s = 0$ and ``fully local'' $P$ when $s = 1$. Figure \ref{fig:localityfig} illustrates this tradeoff for an example prolongation problem on a pair of grid graphs, including the transition from a global optimum of the diffusion term to a global optimum of the locality term. In each case, we only require $P$ to map these processes into one another up to a multiplicative constant: $\alpha$ for the diffusion term and $\beta$ for the locality term. Exhaustive grid search over $\alpha$ and $\beta$ for a variety of prolongations between (a) path graphs and (b) 2D grid graphs of varying sizes has suggested that for prolongation problems where the $G_i$ are both paths or both grids, the best values (up to the resolution of our search, $10^{-6}$) for these parameters are $\alpha=1.0$ and $\beta=n_1/n_2$. 
However, we do not expect this scaling law to hold for general graphs.

\begin{figure}
\label{fig:localityfig}
\begin{center}
\includegraphics[width=\linewidth]{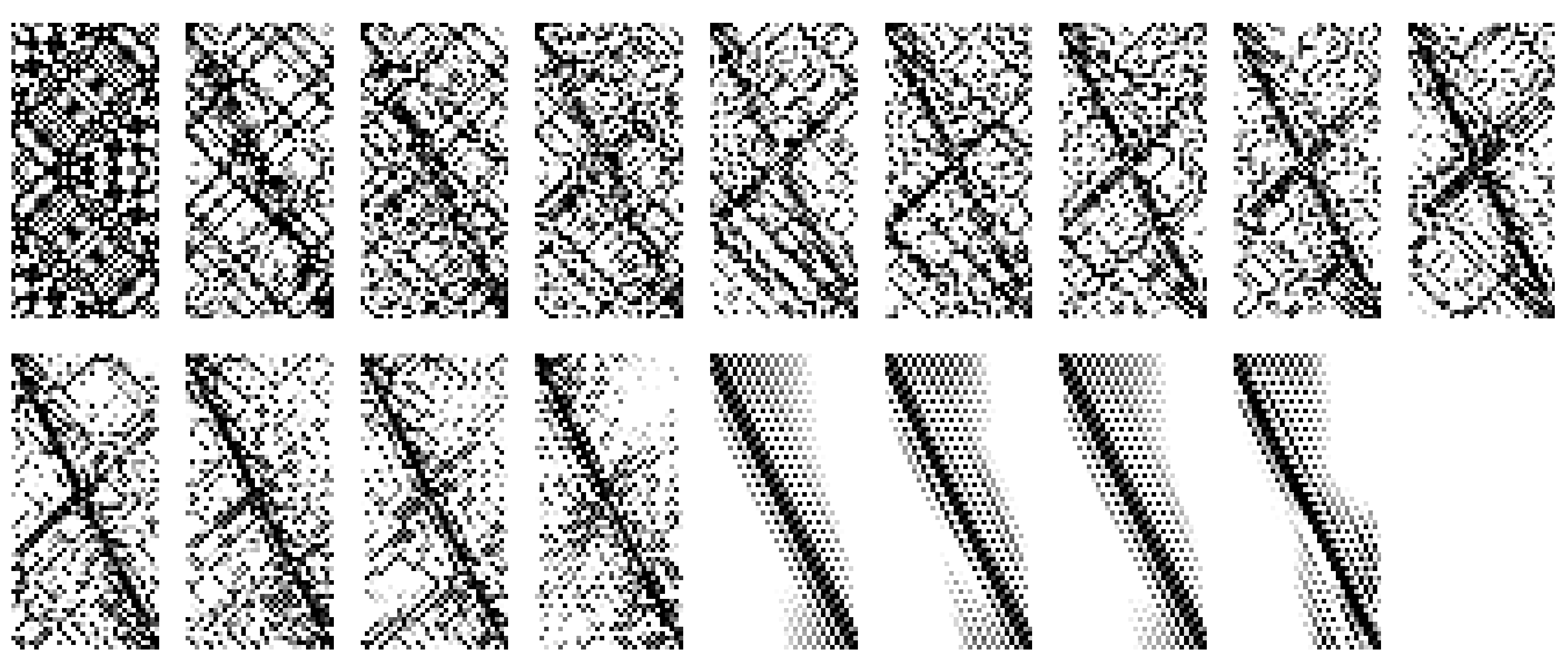}
\end{center}
\caption{Several solutions of our objective function found by PyManOpt as $s$, the relative weight of the two terms of our objective function, is tuned from 0 (fully diffuse, top left) to 1 (fully local, bottom right). Within each subplot, grayscale indicates the magnitude of matrix entries. Note that the $P$ matrices found with $s=0$ do not appear to be structured in a way which respects the locality of the original graphs, whereas the matrices with $s=1$ do.}
\end{figure}

\subsection{Numerical Optimization of P Matrices}
\label{sec:optresults}
\subsubsection{Minimization method}
We tried various publicly available optimization codes to find optima of our objective function. Unless otherwise noted, all $P$ matrices found via optimization were found using PyManOpt, a Python language package for manifold-constrained optimization. In our experience, this package outperformed other numerical methods codes such as constrained Nelder-Mead (as implemented in Mathematica or SciPy), gradient descent with projection back to the constraint manifold, or the orthogonally-constrained optimization code of \cite{wen2013feasible}. More details on our comparison of these software packages may be found in the section ``Comparison of Numerical Methods'' of the Supplementary Material accompanying this paper. 

\subsubsection{Initialization}
We initialize our minimization with an upper-bound solution given by the Munkres minimum-cost matching algorithm; the initial $P$ is $m^*(L_1, L_2)$ as defined in equation \ref{eqn:matchingconstraints}, i.e. the binary matrix where an entry $P_{(i,j)}$ is 1 if the pair $(i,j)$ is one of the minimal-cost pairs selected by the minimum-cost assignment algorithm, and 0 otherwise. While this solution is, strictly speaking, minimizing the error associated with mapping the spectrum of one graph into the spectrum of the other (rather than actually mapping a process running on one graph into a process on the other) we found it to be a reasonable initialization, outperforming both random restarts and initialization with the appropriately sized block matrix $\left( \begin{array}{c}
I \\
0 
\end{array}\right)$. As detailed further in Section \ref{sec:theoryshortsec}, the $P$ found as a solution to this matching problem provides an upper bound for the full orthogonality-constrained optimization problem.

\subsubsection{Precomputing $P$ matrices}
\label{subsub:precomputeP}
For some structured graph lineages it may be possible to derive formulaic expressions for optimal $P$ and $\alpha$, as a function of the lineage index. For example, during our experiments we discovered species of $P$ which are local minima of prolongation between path graphs, cycle graphs, and grid graphs. A set of these outputs is shown in Figure \ref{fig:localityfig}. They feature various diagonal patterns as naturally idealized in Figure \ref{fig:pspeciesfig}. These idealized versions of these patterns all are also empirical local minima of our optimization procedure, for $s=0$ or $s=1$, as indicated. Each column of Figure \ref{fig:pspeciesfig} provides a regular family of $P$ structures for use in our subsequent experiments in Section \ref{sec:netsec}. We have additionally derived closed-form expressions for global minima of the diffusion term of our objective function for some graph families (cycle graphs and grid graphs with periodic boundary conditions). Proof of the optimality of these solutions may be found in the supplementary materials which accompany this paper. However, in practice these global minima are nonlocal (in the sense that they are not close to optimizing the locality term) and thus may not preserve learned spatial rules between weights in levels of our hierarchy. 

Examples of these formulaic $P$ matrices can be seen in Figure \ref{fig:pspeciesfig}. Each column of that figure shows increasing sizes of $P$ generated by closed-form solutions which were initially found by solving smaller prolongation problems (for various graph pairs and choices of $s$) and generalizing the solution to higher $n$. Many of these examples are similar to what a human being would design as interpolation matrices between cycles and periodic grids. However, (a) they are valid local optima found by our optimization code and (b) our approach generalizes to processes running on more complicated or non-regular graphs, for which there may not be an obvious \textit{a priori} choice of prolongation operator. 

We highlight the best of these multiple species of closed-form solution, for both cycle graphs and grid graphs. The interpolation matrix-like $P$ seen in the third column of the ``Cycle Graphs" section, or the sixth column of the ``Grid Graphs'' section of Figure \ref{fig:pspeciesfig}, were the local optima with lowest objective function value (with $s = 1$, i.e. they are fully local). As the best optima found by our method(s), these matrices were our choice for line graph and grid graph prolongation operators in our neural network experiments, detailed in Section \ref{sec:netsec}. 
We reiterate that in those experiments we do not find the $P$ matrices via any optimization method - since the neural networks in question have layer sizes of order $10^3$, finding the prolongation matrices from scratch may be computationally difficult. Instead, we use the solutions found on smaller problems as a recipe for generating prolongation matrices of the proper size.
Furthermore, given two graph lineages $G_1^{(1)}, G_1^{(2)}, G_1^{(3)} \ldots$ and $G_2^{(1)}, G_2^{(2)}, G_2^{(3)} \ldots$, and sequences of optimal matrices $P_1^{(1)}, P_1^{(2)}, P_1^{(3)} \ldots$ and $P_2^{(1)}, P_2^{(2)}, P_2^{(3)} \ldots$ mapping between successive members of each, we can construct $P$ which are related to the optima for prolonging between members of a new graph lineage which is comprised of the levelwise graph box product of the two sequences. We show in (Section \ref{subsec:deompose}, Corollary \ref{thm:decompcorollary}) conditions under which the value of the objective function at $P_{\text{box}}^{(i)} = P_1^{(i)} \otimes P_2^{(i)}$ is an upper bound of the optimal value for prolongations between members of the lineage $G_1^{(1)} \Box G_2^{(1)}, G_1^{(2)} \Box G_2^{(2)}, G_1^{(3)} \Box G_2^{(3)}, \ldots$ . We leave open the question of whether such formulaic $P$ exist for other families of structured graphs (complete graphs, $k$-partite graphs, etc.). Even in cases where formulaic $P$ are not known, the computational cost of numerically optimizing over $P$ may be amortized, in the sense that once a $P$-map is calculated, it may be used in many different hierarchical neural networks or indeed many different multiscale models. 

\begin{figure}
\label{fig:pspeciesfig}
\begin{center}
\hfil
\includegraphics[width=.4\linewidth]{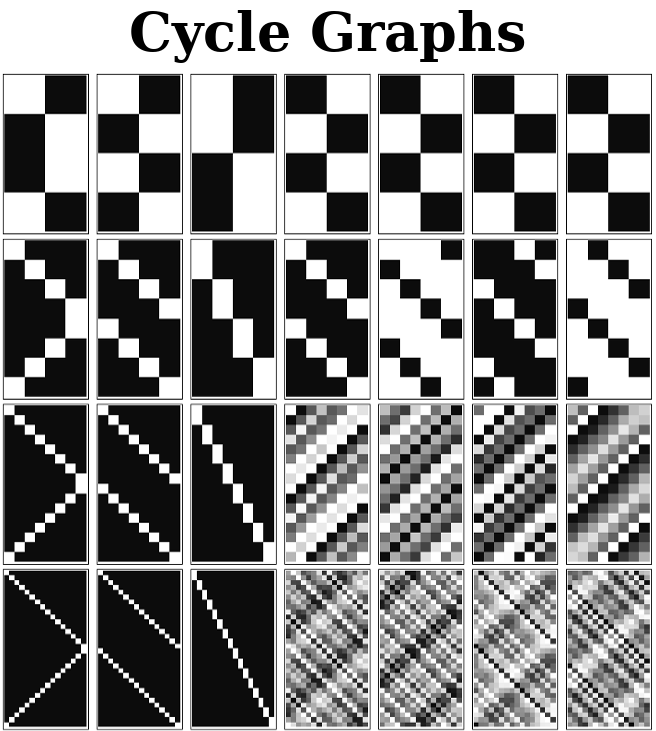}
\hfil
\includegraphics[width=.4\linewidth]{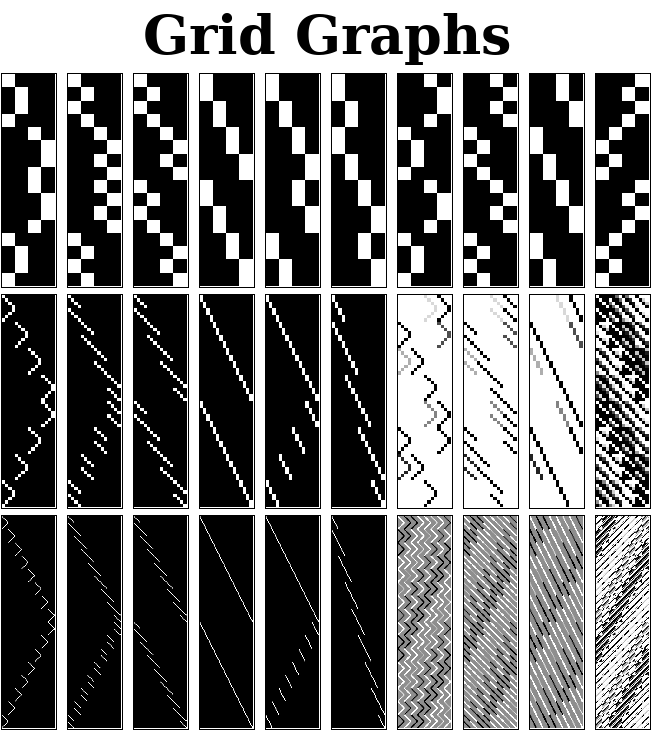}
\hfil
\end{center}
\caption{Examples of $P$ matrices for cycle graph (left) and grid graph (right) prolongation problems of various sizes, which can be generated by closed-form representations dependent on problem size. Within each of the top and bottom plots, columns represent a series of matrices each generated by a particular numerical recipe, with rows representing increasing sizes of prolongation problem. Each matrix plot is a plot of the absolute value of matrix cell values. These closed-form representations were initially found as local minima of our objective function on small problems and then generalized to closed-form representations. For the ``Cycle Graphs'' section, the prolongation problems were between cycle graphs of sizes $n_1 = 2,4,8,16$ and $n_2 = 2*n_1$. Columns 1-3 were solutions found with $s=1$ (fully local), and the rest were found with $s= 0$ (fully diffuse). For the ``Grid Graphs'' section, the prolongation problems were between grids of size $(n_1, n_1)$ to grids of size $(2 n_1, 2 n_1)$ for $n_1$ in $4,8,16$. Columns 1-6 are fully local and columns 7-10 are fully diffuse, respectively. As in Figure \ref{fig:localityfig}, grayscale values indicate the magnitude of each matrix entry.}
\end{figure}

\subsection{Multiscale Artificial Neural Network Algorithm}
\label{subsec:msann_defn}
In this section we describe the Multiscale Artificial Neural Network (MsANN) training procedure, both in prose and in pseudocode (Algorithm \ref{alg:mgannalg}). Let $\mathcal{M}_0 \ldots \mathcal{M}_{L}$ be a sequence of neural network models with identical ``aspect ratios'' (meaning the sizes of each layer relative to other layers in the same model) but differing input resolution, so that $\mathcal{M}_0$ operates at the finest scale and $\mathcal{M}_L$ at the coarsest. For each model $\mathcal{M}_l$, let $\theta^{(l)}_0, \theta^{(l)}_1, \ldots \theta^{(l)}_{n_\text{vars}-1}$ be a list of the $n_\text{vars}$ network parameters (each in matrix or vector form) in some canonical order which is maintained across all scales. Let the symbol $\mathcal{P}^{(l)}_j$ represent either: 
\begin{itemize}
\item If the network parameters $\theta^{(i)}_j$ at levels $i = 0 \ldots L$ are weight matrices between layers $m_1$ and $m_2$ of each hierarchy, then $\mathcal{P}^{(l)}_j$ represents a pair of matrices $\left( P^{(l)}_{\text{input}_j}, P^{(l)}_{\text{output}_j} \right)$, such that:
\begin{itemize}
\item $P^{(l)}_{\text{input}_j}$ prolongs or restricts between possible values of nodes in layer $m_1$ of model $\mathcal{M}_l$, and values of nodes in layer $m_1$ of model $\mathcal{M}_{l+1}$.
\item $P^{(l)}_{\text{output}_j}$ does the same for possible values of nodes in layer $m_2$ of each model.
\end{itemize}
\item If the network parameters $\theta^{(i)}_j$ at levels $i = 0 \ldots L$ are bias vectors which are added to layer $m$ of each hierarchy, then $\mathcal{P}^{(l)}_j$ represents a single $P^{(l)}_j$ which prolongs or restricts between possible values of nodes in layer $m$ of model $\mathcal{M}_l$, and values of nodes in layer $m$ of model $\mathcal{M}_{l+1}$.
\end{itemize}

As a concrete example, for a hierarchy of single-layer networks $\mathcal{M}_0, \mathcal{M}_1, \mathcal{M}_2$,  each with one weight matrix $W^{(l)}$ and one bias vector $b^{(l)}$, we could have $\theta^{(l)}_0 = W^{(l)}, \theta^{(l)}_1 = b^{(l)}$ for each $\mathcal{M}_l$. $\mathcal{P}^{(0)}_0$ would represent a pair of matrices which map between the space of possible values of $ W^{(0)}$ and the space of possible values of $ W^{(1)}$ in a manner detailed in the next section. On the other hand, $\mathcal{P}^{(0)}_1$ would represent a single matrix which maps between $b^{(0)}$ and $b^{(1)}$.  Similarly, $\mathcal{P}^{(1)}_0$ would map between $ W^{(1)}$ and $ W^{(2)}$, and $\mathcal{P}^{(1)}_1$ between $b^{(1)}$ and $b^{(2)}$. In Section \ref{subsub:msann_training}, we describe a general procedure for training such a hierarchy according to standard multilevel modeling schedules of refinement and coarsening, with the result that the finest network, informed by the weights of all coarser networks, requires fewer training examples.
\subsubsection{Weight Prolongation and Restriction Operators}
In this section we introduce the prolongation and restriction operators for neural network weight and bias optimization variables in matrix or vector form respectively. 

For a 2D matrix of weights $W$, define
\begin{gather}
\label{eqn:weightpro}
\begin{aligned}
\text{Pro}_{\mathcal{P}} \circ W &\equiv
\text{Pro}_{(P_{\text{input}}, P_{\text{output}})} \circ W &\equiv P_{\text{input}} W P_{\text{output}}^T \\
\text{Res}_{\mathcal{P}} \circ W &\equiv 
\text{Res}_{(P_{\text{input}}, P_{\text{output}})} \circ W &\equiv P_{\text{input}}^T W P_{\text{output}}
\end{aligned}
\end{gather}
where $P_{\text{input}}$ and $P_{\text{output}}$ are each prolongation maps between graphs which respect the structure of the spaces of inputs and outputs of $W$, i.e. whose structure is similar to the structure of correlations in that space.
Further research is necessary to make this notion more precise. In our experiments on autoencoder networks in Section \ref{sec:netsec}, we use example problems with an obvious choice of graph to use. In these 1D and 2D machine vision tasks, where we expect each pixel to be highly correlated with the activity of its immediate neighbors in the grid, 1D and 2D grids are clear choices of graphs for our prolongtion matrix calculation. Other choices may lead to similar results; for instance, we speculate that since neural network weight matrices may be interpreted as the weights of a multipartite graph of connected neurons in the network, these graphs could be an alternate choice of structure to prolong/restrict between.  We leave for future work the development of automatic methods for determining these structures. 

Note that the Pro and Res linear operators satisfy
$ \text{Res}_{\mathcal{P}} \circ \text{Pro}_{\mathcal{P}} = I$,
the identity operator, so 
$ \text{Pro}_{\mathcal{P}} \circ \text{Res}_{\mathcal{P}}$
is a projection operator.
%
%

For a 1D matrix of biases $b$, define
\begin{equation}
\label{eqn:biaspro}
\begin{split}
\text{Pro}_{\mathcal{P}} \circ b &= P \cdot b \\
\text{Res}_{\mathcal{P}} \circ b &= P^T \cdot b
\end{split}
\end{equation}
where, as before, we require that $P$ be a prolongation matrix between graphs which are appropriate for the dynamics of the network layer where $b$ is applied. Again $ \text{Res}_{\mathcal{P}} \circ \text{Pro}_{\mathcal{P}} = I$.

Given such a hierarchy of models $\mathcal{M}_0 \ldots \mathcal{M}_L $, and appropriate \text{Pro} and \text{Res} operators as defined above, we define a \emph{Multiscale Artificial Neural Network (MsANN)} to be a neural network model with the same layer and parameter dimensions as the largest model in the hierarchy, where each layer parameter $\Theta_j$ is given by a sum of prolonged weight matrices from level $j$ of each of the models defined above: 
\begin{align}
\label{eqn:hierarch_var_eqn}
\Theta_j = \theta^{(0)}_j &+ \text{Pro}_{1 \rightarrow 0} \circ \theta^{(1)}_j + \text{Pro}_{2 \rightarrow 0} \circ \theta^{(2)}_j \ldots \text{Pro}_{L \rightarrow 0} \circ \theta^{(L)}_j \\
\intertext{Here we are using $\text{Pro}_{k \rightarrow 0}$ as a shorthand to indicate composed prolongation from model $k$ to model $0$, so if $\theta^{(i)}_j$ are weight variables we have (by Equation \ref{eqn:weightpro}) }
\Theta_j = \theta^{(0)}_j &+ P^{(0)}_{\text{input}_j} \theta^{(1)}_j {\left( P^{(0)}_{\text{output}_j} \right)}^T \\ 
 &+ P^{(0)}_{\text{input}_j} P^{(1)}_{\text{input}_j} \theta^{(2)}_j {\left( P^{(1)}_{\text{output}_j} \right)}^T {\left( P^{(0)}_{\text{output}_j} \right)}^T  \nonumber \\
 & + \quad \ldots \quad + \left( P^{(0)}_{\text{input}_j} \ldots P^{(L-1)}_{\text{input}_j} \theta^{(L)}_j  {\left( P^{(L-1)}_{\text{output}_j} \right)}^T \ldots {\left( P^{(0)}_{\text{output}_j} \right)}^T \right) \nonumber 
 \intertext{and if  $\theta^{(i)}_j$ are bias variables we have (by Equation \ref{eqn:biaspro})}
 \Theta_j = \theta^{(0)}_j & + P^{(0)}_{\text{bias}_j} \theta^{(1)}_j + P^{(0)}_{\text{bias}_j} P^{(1)}_{\text{bias}_j} \theta^{(2)}_j +  \ldots  + \left( P^{(0)}_{\text{bias}_j} P^{(1)}_{\text{bias}_j} \ldots P^{(L-1)}_{\text{bias}_j} \theta^{(L)}_j \right)
\end{align}
We note that matrix products such as $P^{(0)}_{\text{input}_j} \ldots P^{(k)}_{\text{input}_j}$ need only be computed once, during model construction.

\subsubsection{Multiscale Artificial Neural Network Training}
\label{subsub:msann_training}
The Multiscale Artificial Neural Network algorithm is defined in terms of a recursive `cycle' that is analogous to one epoch of default neural network training. Starting with $\mathcal{M}_0$ (i.e. the finest model in the hierarchy), we call the routine $\text{MsANNCycle}(0)$, which is defined recursively. At any level $l$, MsANNCycle trains the network at level $l$ for $k$ batches of training examples, recurses by calling $\text{MsANNCycle}(l + 1)$, and then returns to train for $k$ further batches at level $l$. The number of calls to $\text{MsANNCycle}(l + 1)$ inside each call to $\text{MsANNCycle}(l)$ is given by a parameter $\gamma$.

This is followed by additional training at the refined scale; this process is normally \cite{vanvek1996algebraic} referred to by the multigrid methods community as `restriction' and `prolongation' followed by `smoothing'. The multigrid methods community additionally has special names for this type of recursive refining procedure with $\gamma = 1$ (``V-Cycles'') and $\gamma = 2$ (``W-Cycles''). See Figure \ref{fig:gammafig} for an illustration of these contraction and refinement schedules. In our numerical experiments below, we examine the effect of this parameter on multigrid network training.

Neural network training with gradient descent requires computing the gradient of the error $E$ between the network output and target with regard to the network parameters. This gradient is computed by taking a vector of error for the nodes in the output layer, and \emph{backpropagating} that error backward through the network layer by layer to compute the individual weight matrix and bias vector gradients. An individual network weight or bias term $w$ is then adjusted using gradient descent, i.e. the new value $w'$ is given by $w' = w - \eta \frac{d E}{d w}$, where $\eta$ is a learning rate or step size. Several techniques can be used to dynamically change learning rate during model training - we refer the reader to \cite{bishop2006pattern} for a description of these techniques and backpropagation in general.

Our construction of the MsANN model above did not make use of the $\text{Res}$ (restriction) operator - we show here how this operator is used to compute the gradient of the coarsened variables in the hierarchy. This can be thought of as continuing the process of backpropagation through the $\text{Pro}$ operator. For these calculations we assume $\Theta_j$ is a weight matrix, and derive the gradient for a particular $\theta^{(k)}_j$. For notational simplicity we rename these matrices $W$ and $V$, respectively. We also collapse the matrix products
\begin{align}
P^\text{(input)} &= P^{(0)}_{\text{input}_j} P^{(1)}_{\text{input}_j} \ldots P^{(k)}_{\text{input}_j} \\
{\left( P^\text{(output)} \right)}^T &= {\left( P^{(L-1)}_{\text{output}_j} \right)}^T {\left( P^{(L-2)}_{\text{output}_j} \right)}^T \ldots {\left( P^{(0)}_{\text{output}_j} \right)}^T
\end{align}
Let $\frac{d E}{d W}$ be a matrix where ${\left(\frac{d E}{d W} \right)}_{mn} = \frac{d E}{d w_{mn}}$, calculated via backpropagation as described above. Then, for some $m,n$:
\begin{align}
    \frac{d w_{mn}}{d v_{kl}} &= \frac{d }{d v_{kl}} {\left( \ldots + \text{Pro}_{} \circ V +\ldots \right)}_{mn} \\ 
    &= \frac{d }{d v_{kl}} {\left( \ldots + \text{Pro}_{k \rightarrow 0} \circ V +\ldots \right)}_{mn} = \frac{d }{d v_{kl}} {\left( \text{Pro}_{k \rightarrow 0} \circ V \right)}_{mn} \nonumber \\
    &= \frac{d }{d v_{kl}} {\left( P^\text{(input)} V {\left( P^\text{(output)} \right)}^T \right)}_{mn} 
    = \frac{d }{d v_{kl}} \left( \sum_{a,b} p^\text{(input)}_{ma} v_{ab} p^\text{(output)}_{nb}  \right) \nonumber \\
    &= \left( p^\text{(input)}_{mk} p^\text{(output)}_{nl}  \right) \nonumber 
\end{align}
Then, 
\begin{align}
    \frac{d E}{d v_{kl}} &= \sum_{m,n} \frac{d E}{d w_{mn}} \frac{d w_{mn}}{d v_{kl}} \\
    &= \sum_{m,n} \frac{d E}{d w_{mn}} p^\text{(input)}_{mk} p^\text{(output)}_{nl} \nonumber \\
    &= {\left( {\left( P^\text{(input)} \right)}^T \frac{d E}{d W} P^\text{(output)} \right)_{kl}} \nonumber \\
    \intertext{and so}
    \frac{d E}{d V} &= {\left( P^\text{(input)} \right)}^T \frac{d E}{d W} P^\text{(output)}\nonumber \\
    \intertext{and therefore finally}
    \label{eqn:msann_res_equation} 
    \frac{d E}{d V} &= \text{Res}_{0 \rightarrow k} \circ \frac{d E}{d W} 
\end{align}
where $\text{Res}$ is as in \ref{eqn:weightpro}.

\SetKwFunction{mgann}{MsANNCycle}%
\SetKwProg{Fn}{Procedure}{\string:}{}
\begin{algorithm}[H]
 \Fn{\mgann{$l$}}{
 Train model $\mathcal{M}_l$ for $k$ batches, where each consists of:
 \begin{enumerate}
    \item Feed examples through the network in feed-forward mode;
    \item Compute error $E$ between network output and target;
    \item Use the classical backpropagation algorithm to compute the gradient of top-level parameter $\Theta_j$ w.r.t. this error;
    \item Use the appropriate $\text{Res}$ operations to compute the gradient of $E$ w.r.t. the parameters in $\mathcal{M}_l$, as described in Equation \ref{eqn:msann_res_equation}.
 \end{enumerate}
 \If{\emph{max\_depth} has not been reached}{
  \For{$1 \leq i \leq \gamma$}{
    \mgann{$l+1$};\\
    Train model $\mathcal{M}_l$ for $k$ batches, as above
  }
 }\;}
 \label{alg:mgannalg}
 \caption{One `cycle' of the MsANN procedure.}
\end{algorithm}

We also note here that our code implementation of this procedure does not make explicit use of the $\text{Res}$ operator; instead, we use the automatic differentiation capability of Tensorflow \cite{abadi2016tensorflow} to compute this restricted gradient. This is necessary because data is supplied to the model, and error is calculated, at the finest scale only. Hence we calculate the gradient at this scale and restrict it to the coarser layers of the model. It may be possible to feed coarsened data through only the coarser layers of the model, eliminating the need for computing the gradient at the finest scale, but we do not explore this method in this paper.

\begin{figure}
\label{fig:gammafig}
\begin{center}
\includegraphics[width=\linewidth]{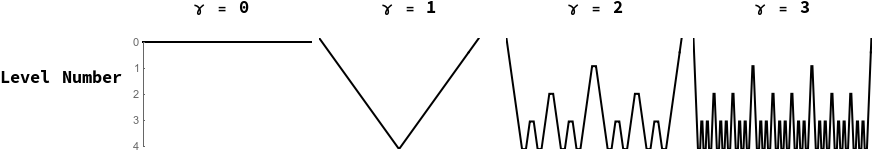}
\end{center}
\caption{Visits to models in a hierarchy of neural networks realized by several values of the recursion frequency parameter $\gamma$. The $\gamma = 1$ case and the $\gamma = 2$ case are referred to as ``V-cycles'' and ``W-cycles'', respectively. Each time the multilevel training procedure visits a level, it performs some number, $k$, of smoothing steps (i.e. gradient descent at that resolution) at that model.}
\end{figure}

\section{Machine Learning Experiments}
\label{sec:netsec}
\subsection{Preliminaries}
\label{subsec:mle_prelim}
We present four experiments
using this Multiscale Neural Network method. All of the experiments below demonstrate that our multigrid method outperforms default training (i.e. training only the finest-scale network), in terms of the number of training examples 
(summed over all scales)
needed to reach a particular mean-squared error (MSE) value. We perform two experiments with synthetic machine vision tasks, as well as two experiments with benchmark image datasets for machine learning. While all of the examples presented here are autoencoder networks (networks whose training task is to reproduce their input at the output layer, while passing through a bottleneck layer or layers), we do not mean to imply that MsANN techniques are constrained to autoencoder networks. All network training uses the standard backpropagation algorithm to compute training gradients, and this is the expected application domain of our method. 
Autoencoding image data is a good choice of machine learning task for our experiments for two main reasons. First, autoencoders are symmetric and learn to reproduce their input at their output. Other ML models (for instance, neural networks for classification) have output whose nodes are not spatially correlated, and it is not yet clear if our approach will generalize to this type of model. Secondly, since the single and double-object machine vision tasks operate on synthetic data, we can easily generate an arbitrary number of samples from the data distribution, which was useful in the early development of this procedure. Our initial successes on this synthetic data led us to try the same task with a standard benchmark real-world dataset.
For each experiment, we use the following measure of computational cost to compare relative performance. Let $\left| \mathcal{M} \right|$ be the number of trainable parameters in model $\mathcal{M}$. We compute the cost of a training step of the weights in model $\mathcal{M}_k$ using a batch of size $b$ as $\frac{|\mathcal{M}_k|}{|\mathcal{M}_0|} b$. The total cost $C(t)$ of training at step $t$ is the sum of this cost over all training steps thus far at all scales. This cost is motivated by the fact that the number of multiply operations for backpropagation is $O(n m)$ in the total number of network parameters $m$ and training examples $n$, so we are adding up the relative cost of using a batch of size $b$ to adjust the weights in model $\mathcal{M}_k$, as compared to the cost of using that same batch to adjust the weights in $\mathcal{M}_0$.

\subsection{Simple Machine Vision Task}
\label{subsec:auto}
As an initial experiment in the capabilities of hierarchical neural networks, we first try two simple examples: finding lower-dimensional representation of two artificial datasets. In both cases, we generate synthetic data by uniformly sampling from 
\begin{enumerate}
\item the set of binary-valued vectors with one ``object'' comprising a contiguous set of pixels one-eighth as long as the entire vector set to 1, and the rest zero; and
\item the set of vectors with two such non-overlapping objects.
\end{enumerate}
In each case, the number of possible unique data vectors is quite low: for inputs of size 1024, we have 1024 - 128 = 896 such vectors. Thus, for both of the synthetic datasets we add binary noise to each vector, where each ``pixel" of the input has an independent chance of firing spuriously with $p=0.05$. This noise in included only in the input vector, making these networks \emph{Denoising Autoencoders}: models whose task is to remove noise from an input image.  
\subsubsection{Single-Object Autoencoder}

We first test the performance of this procedure on a simple machine vision task. The neural networks in our hierarchy of models each have layer size specification (in number of units) $[2^n, 2^{n-2}, 2^{n-3} ,2^{n-2}, 2^n]$ for $n$ in $\{10, \ldots 6\}$, with a bias term at each layer and sigmoid logistic activation.  We present the network with binary vectors which are 0 everywhere except for a contiguous segment of indices of length $2^{n-3}$ which are set to 1, with added binary noise as described above. The objective function to minimize is the mean-squared error (MSE) between the input and output layers. Each model in the hierarchy is trained using 
RMSPropOptimizer in Tensorflow, with learning rate $\alpha = 0.0005$.

The results of this experiment are plotted in Figure \ref{fig:autofig} and summarized in Table \ref{tbl:oneobj}. We perform multiple runs of the entire training procedure with differing values of $k$ (the number of smoothing steps), $\gamma$ (the multigrid cycle parameter), and $L$ (depth of hierarchy).  Notably, nearly all multigrid schedules demonstrate performance gains over the default network (i.e. the network which trains only at the $l=0$ scale), with more improvement for higher values of $k$, $L$, and $\gamma$. The hierarchy which learned most rapidly was the deepest model $(L=6)$ with $k = 4$ and $\gamma = 3$. Those multigrid models which did not improve over the default network were only slightly more computationally expensive per unit of accuracy than their default counterparts, and the multigrid models which did improve, improved significantly.

\begin{figure}
\label{fig:autofig}
\centering
\begin{minipage}{.47\linewidth}
	\includegraphics[width=\tplotfactor\linewidth]{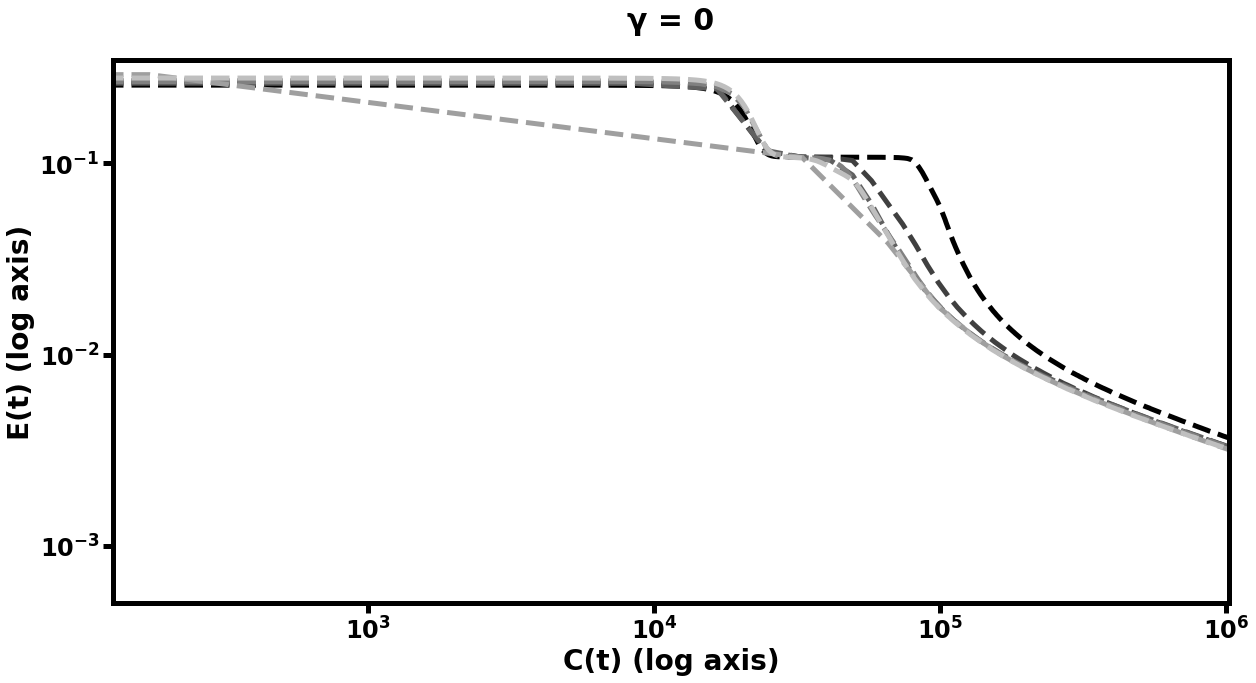} \\
	\includegraphics[width=\tplotfactor\linewidth]{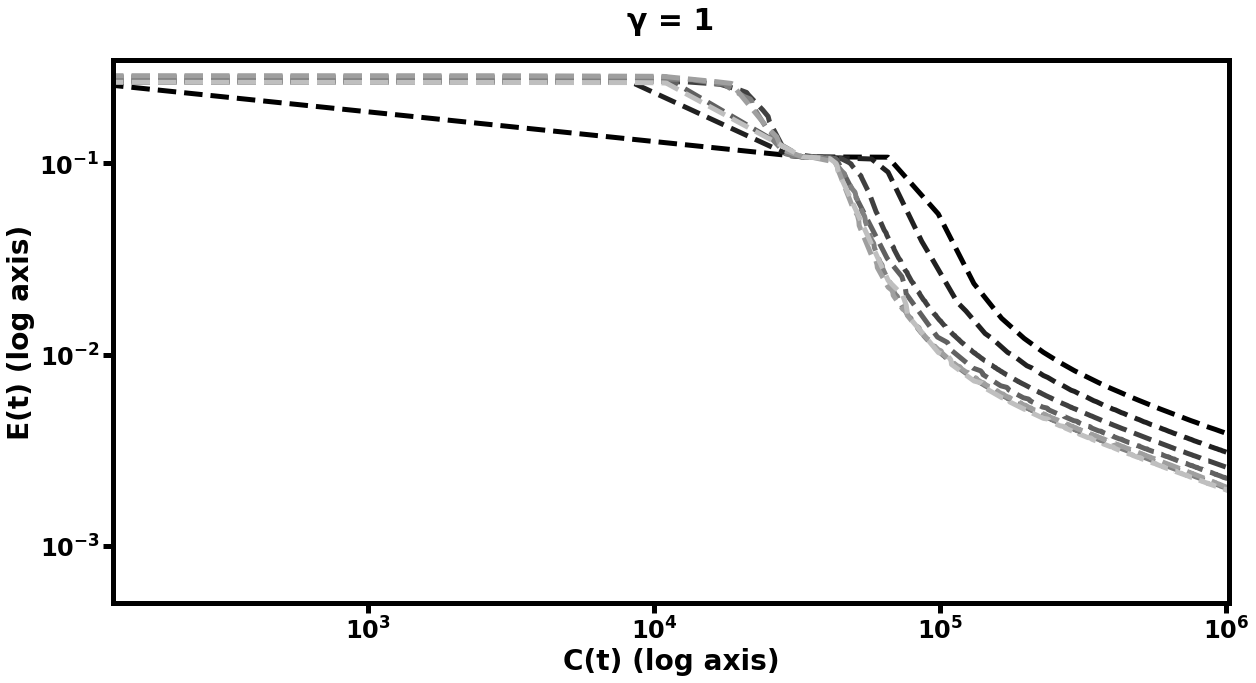} \\
	\includegraphics[width=\tplotfactor\linewidth]{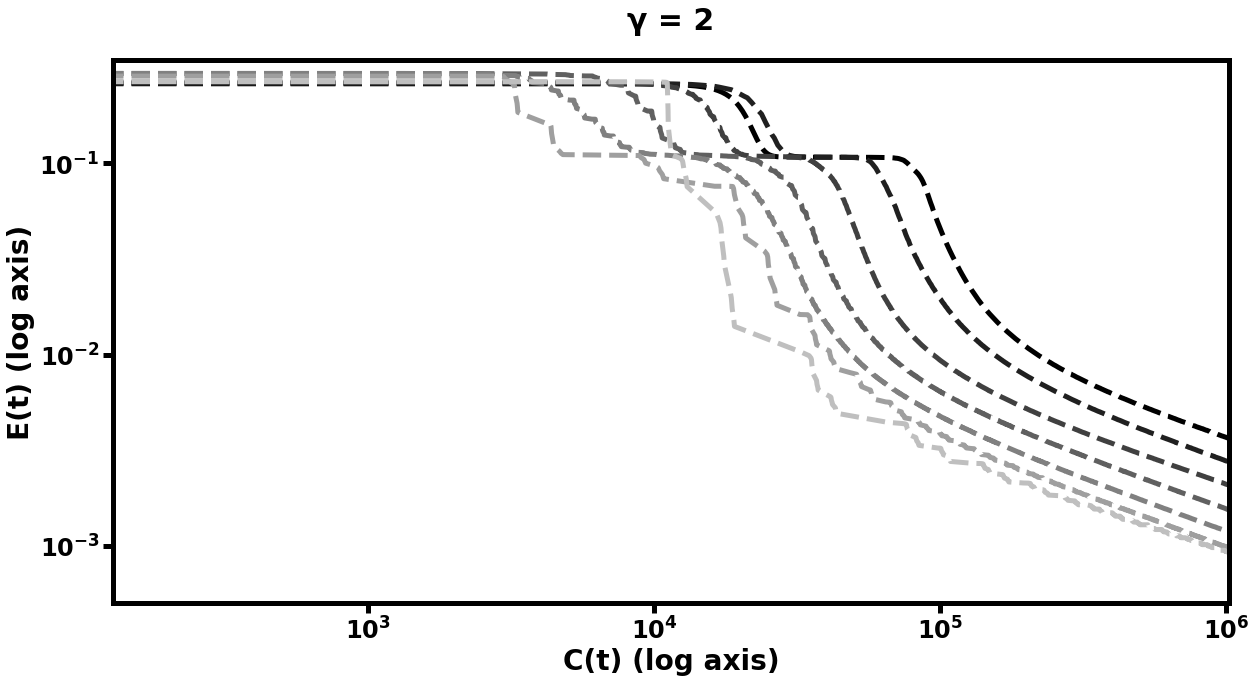} \\
	\includegraphics[width=\tplotfactor\linewidth]{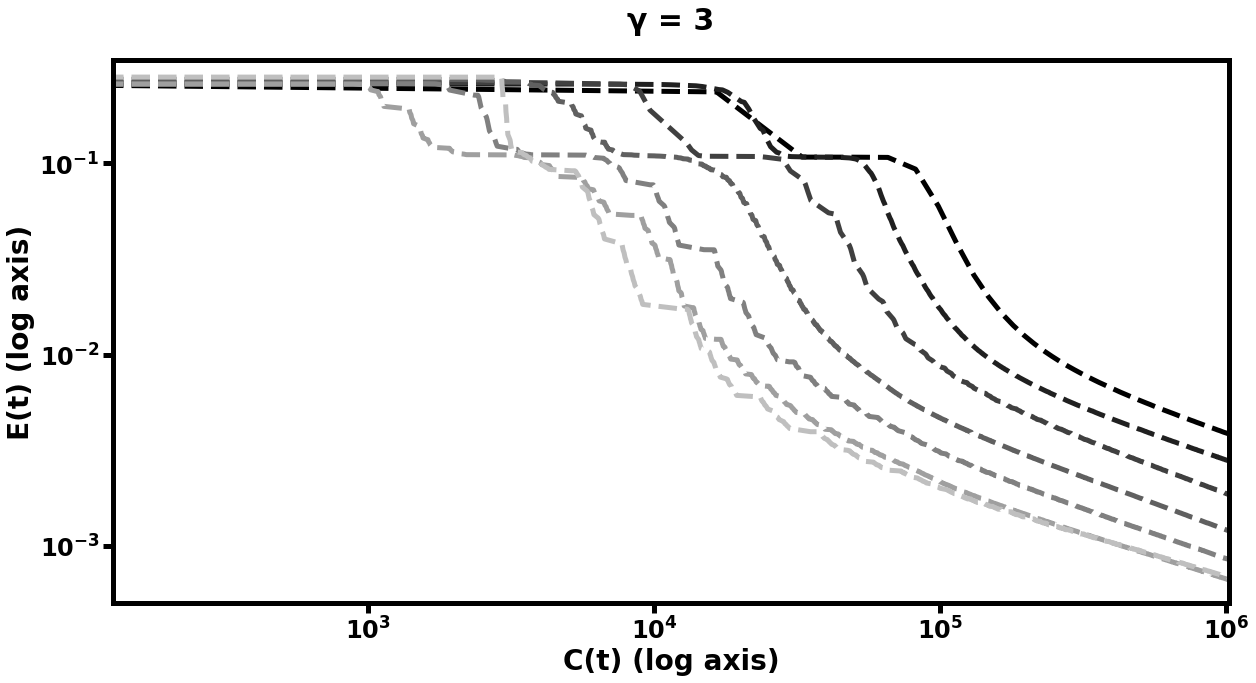} \\
\end{minipage}
\begin{minipage}{.47\linewidth}
	\includegraphics[width=\tplotfactor\linewidth]{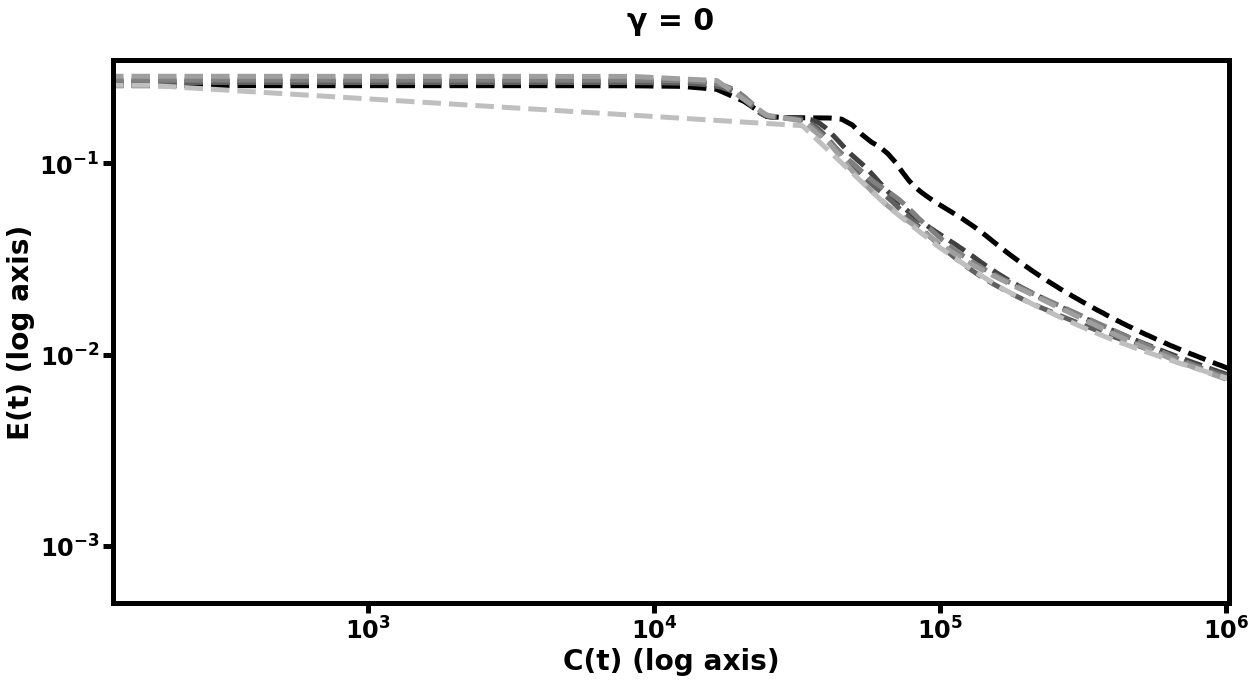} \\
	\includegraphics[width=\tplotfactor\linewidth]{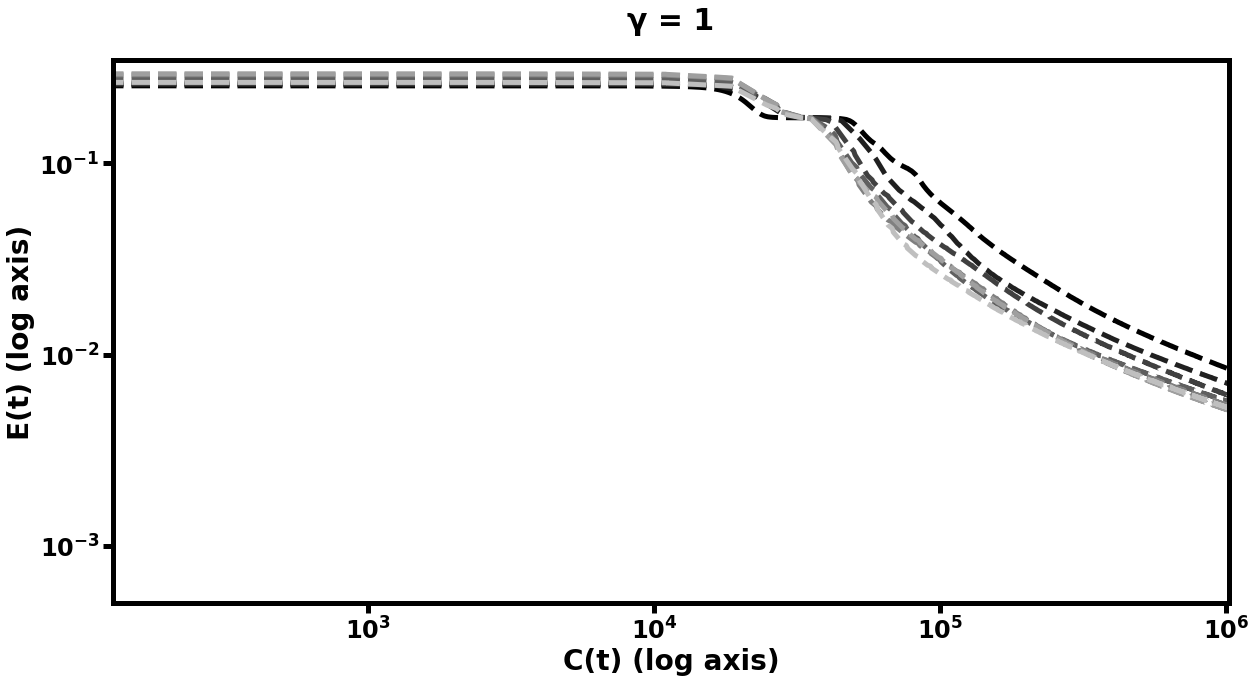} \\
	\includegraphics[width=\tplotfactor\linewidth]{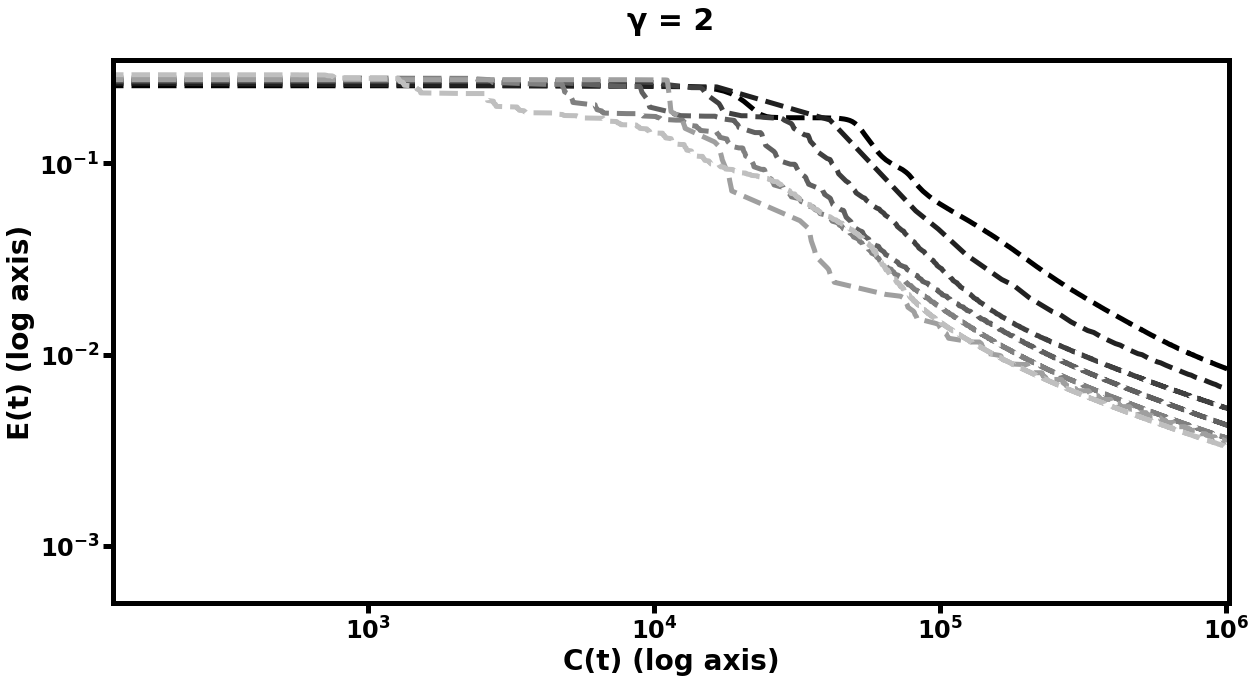} \\
	\includegraphics[width=\tplotfactor\linewidth]{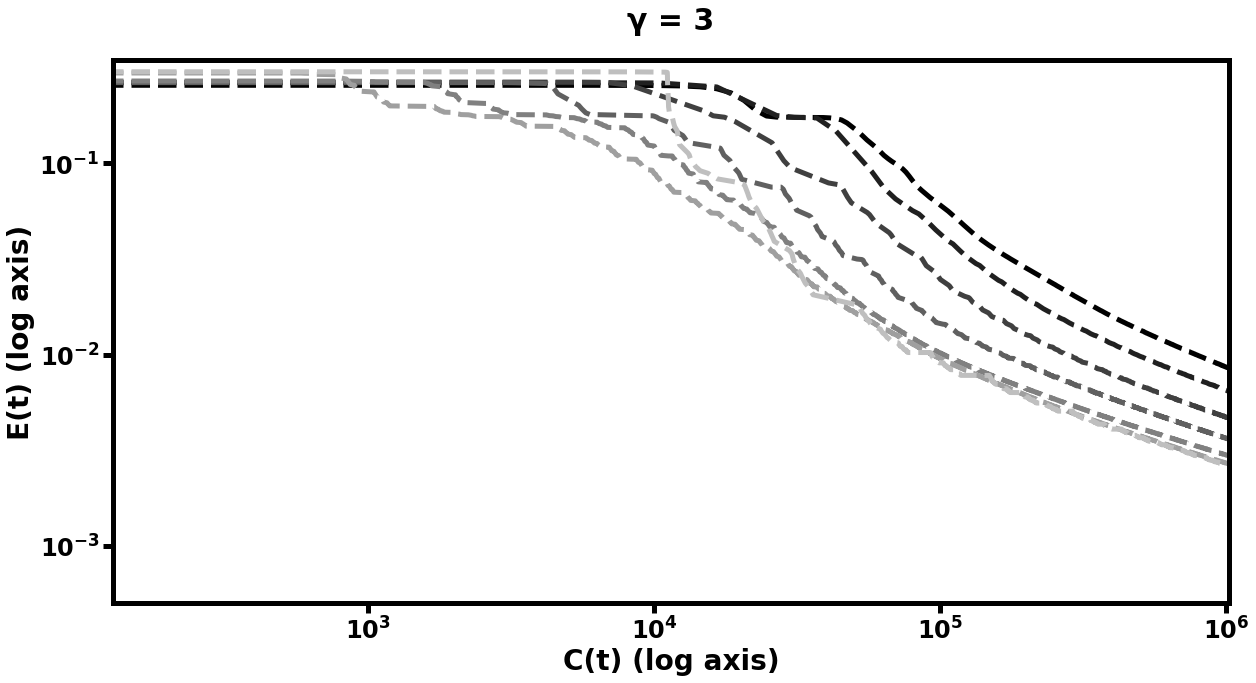} \\
\end{minipage} \\
\caption{Log-log plots of accuracy $E(t)$ as a function of training cost $C(t)$ attained by a variety of hierarchical neural networks training on a simple machine vision task, demonstrating that deeper hierarchies with more mutligrid behavior learn more rapidly. Plots are ordered from top to bottom in increasing depth of recursion parameter $\gamma$; left plots are the single-object experiments and right plots are the double-object experiments. Within each plot, different curves represent different values of the depth of hierarchy, from $L=6$ (lightest) to $L=0$ (darkest). Each line is the best run for that pair $(L, \gamma)$ over all choices of $k$ (number of smoothing steps at each level) in $\{1,2,4,8,16,32,64,128\}$. }
\end{figure}

\begin{table}[h]%
{\Large
\resizebox{\columnwidth}{!}{%
\begin{tabular}{|r || S[table-parse-only] | S[table-parse-only] | S[table-parse-only] | l |}
\hline
 & {Best MsANN} & {Worst MsANN} & {Default} & Best MsANN params\\
\hline
Final MSE & 6.612e-04 & 4.431e-03 & 3.654e-03 & $(\gamma = 3, L = 5, k = 004)$\\
\hline
Cost to $\frac{1}{10}$ MSE & 7.342e+03 & 1.640e+05 & 1.266e+05 & $(\gamma = 3, L = 6, k = 004)$\\
\hline
\end{tabular}%

}
\caption{Best performance (on validation dataset for the one-object autoencoding task) by any combination of parameters in our sweep over values for $\gamma$ (recursion constant), $L$ (depth of network), and $k$ (number of batches processed at each visit to each level). We report the final Mean-Squared Error for both the best and worst combination of these parameters, as well as for default training. We also report the best combination of parameters. The second row indicates the cost $C(t)$ necessary to train each model to $\frac{1}{10}$ of the error at which it began. The best MsANN network reaches this threshhold in an order of magnitude less cost, and its final error is roughly half that of the default model, demonstrating clear improvement over training without multigrid.}
\label{tbl:oneobj}
}
\end{table}

\subsubsection{Double-Object Autoencoder}
\label{subsub:2obj}
We repeat the above experiment with a slightly more difficult machine vision task - the network must learn to de-noise an image with two  (non-overlapping) `objects' in the visual field. We use the same network structure and training procedure, and note that we see again (plotted in Figure \ref{fig:autofig} and summarized in Table \ref{tbl:twoobj}) that the hierarchical model is more efficient, reaching lower error in the same amount of computational cost $C(t)$. The multigrid neural networks again typically learn much more rapidly than the non-multigrid models.

\begin{table}[h]%
\resizebox{\columnwidth}{!}{%
\begin{tabular}{|r || S[table-parse-only] | S[table-parse-only] | S[table-parse-only] | l |}
\hline
 & {Best MsANN} & {Worst MsANN} & {Default} & Best MsANN params\\
\hline
Final MSE & 2.576e-03 & 8.998e-03 & 8.816e-03 & $(\gamma = 3, L = 6, k = 002)$\\
\hline
Cost to $\frac{1}{10}$ MSE & 2.433e+04 & 2.623e+05 & 2.216e+05 & $(\gamma = 3, L = 6, k = 016)$\\
\hline
\end{tabular}%

}
\caption{Best performance (on validation dataset for the two-object autoencoding task). Again the MsANN network demonstrates performance and accuracy gains over neural network training alone. See Table \ref{tbl:oneobj}.}
\label{tbl:twoobj}
\end{table}

\subsection{MNIST}
\label{subsec:MNIST}
To supplement the above synthetic experiments with one using real-world data, we perform the same experiment with an autoencoder for the MNIST handwritten digit dataset \cite{lecun1998gradient, lecun2010mnist}. In this case, rather than the usual MNIST classification task, we use an autoencoder to map the MNIST images into a lower-dimensional $(d=128)$ space with good reconstruction. We use the same network structure as in the 1D vision example; also as in that example, each network in the hierarchy is constructed of fully connected layers with bias terms and sigmoid activation, and smoothing steps are performed with RMSProp \cite{hinton2012neural} with learning rate 0.0005. The only difference is that in this example we do not add noise to the input images, since the dataset is larger by two orders of magnitude.

In this experiment, we see (in Figure \ref{fig:mnistfig} and Table \ref{tbl:mnist}) similar improvement in efficiency. Table \ref{tbl:mnist} summarizes these results: the best multilevel models learned more rapidly and achieved lower error than their single-level counterparts, whereas the worst multilevel models performed on par with the default model.  Because the MNIST data is comprised of 2D images, we tried using $P$ matrices which were the optima of prolongation problems between grids of the appropriate sizes, in addition to the same 1D $P$ used in the prior two experiments. The difference in performance between these two choices of underlying structure for the prolongation maps can be seen in Figure \ref{fig:mnistfig}. With either approach, we see similar results to the synthetic data experiment, in that more training steps at the coarser layers results in improved learning performance of the finer networks in the hierarchy. However, the matrices optimized for 2D prolongation perform marginally better than their 1D cousins, - in particular, the multigrid hierarchy with 2D prolongations took 60\% of the computational cost to reduce its error to $\frac{1}{10}$ of its original value, as compared to the 1D version. We explore the effect of varying the strategy used to pick $P$ in Subsection \ref{subsec:p_choice}.

\begin{figure}
\centering
\label{fig:mnistfig}
\begin{minipage}{.47\linewidth}
	\includegraphics[width=\tplotfactor\linewidth]{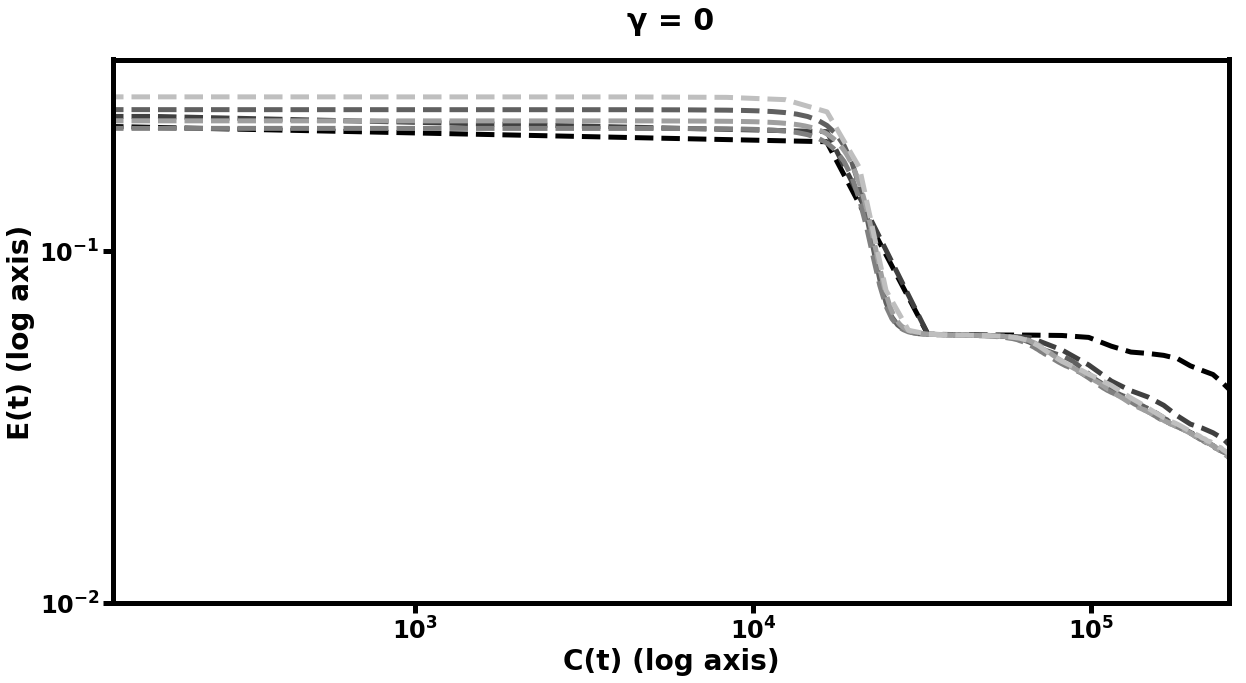} \\
	\includegraphics[width=\tplotfactor\linewidth]{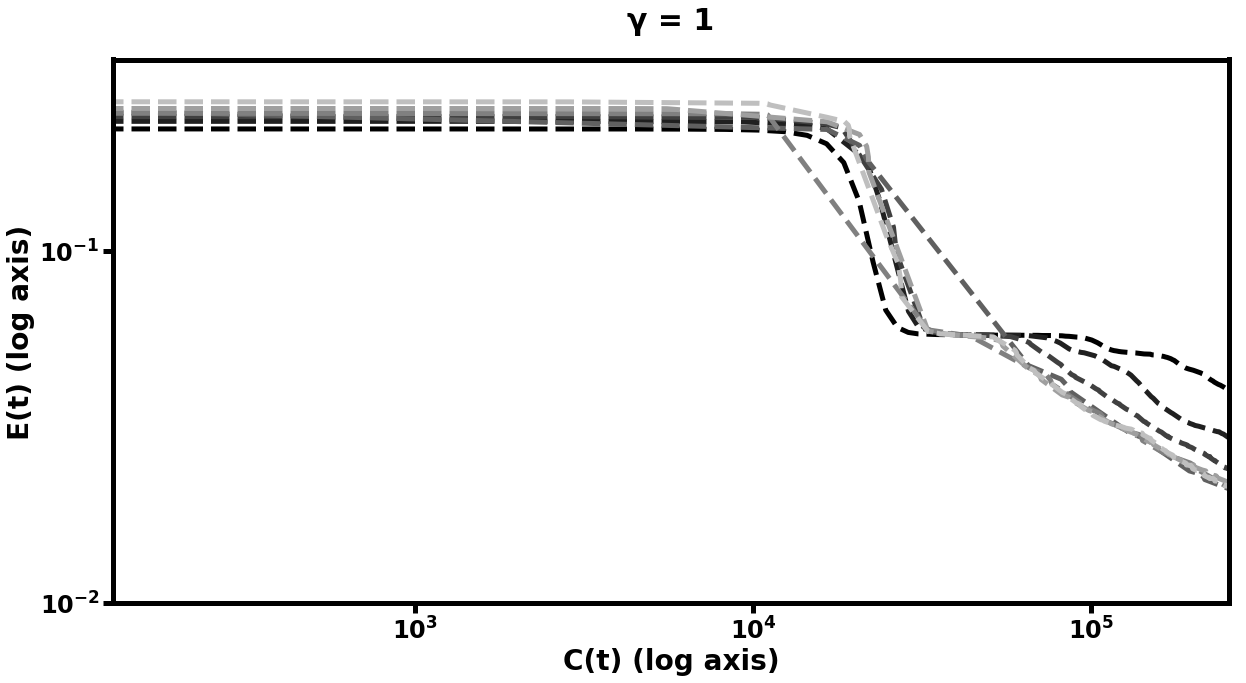} \\
	\includegraphics[width=\tplotfactor\linewidth]{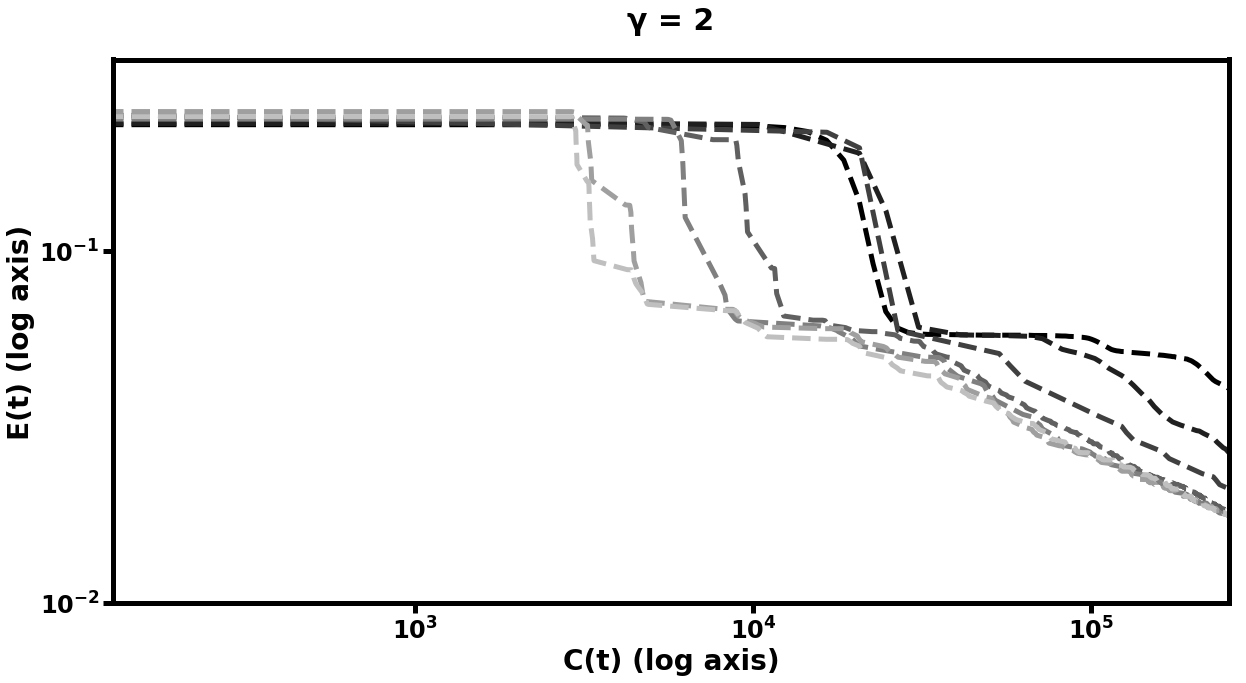} \\
	\includegraphics[width=\tplotfactor\linewidth]{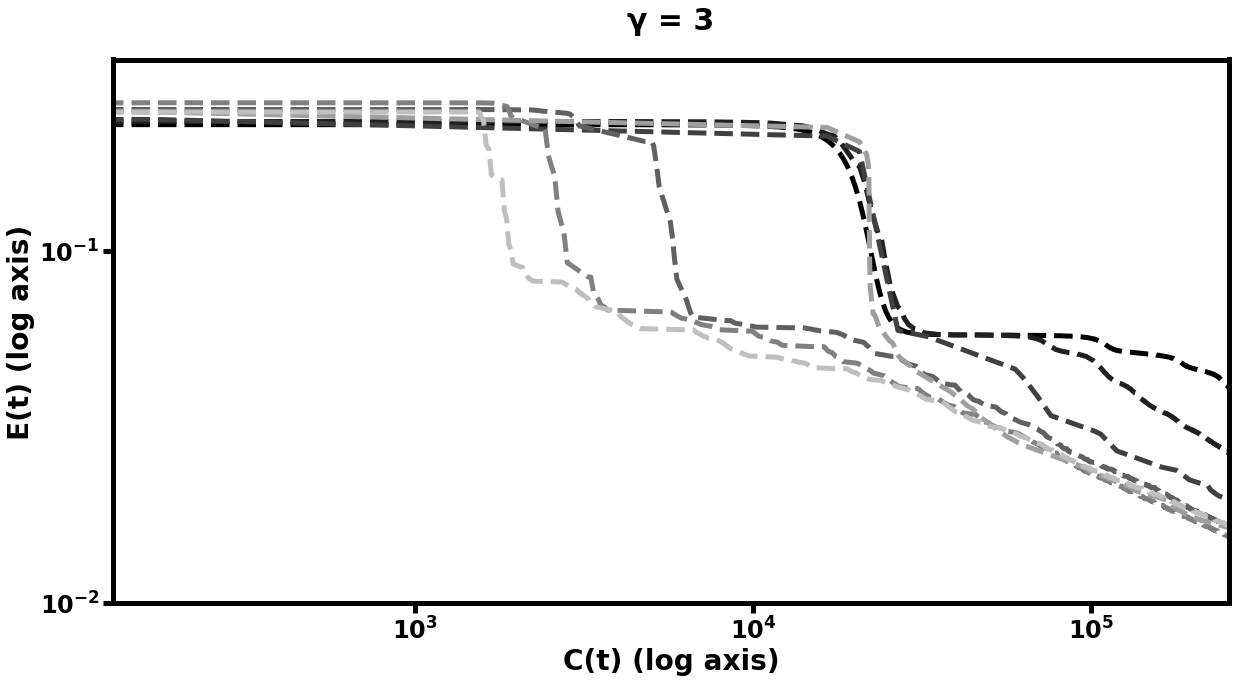} \\
\end{minipage}
\begin{minipage}{.47\linewidth}
	\includegraphics[width=\tplotfactor\linewidth]{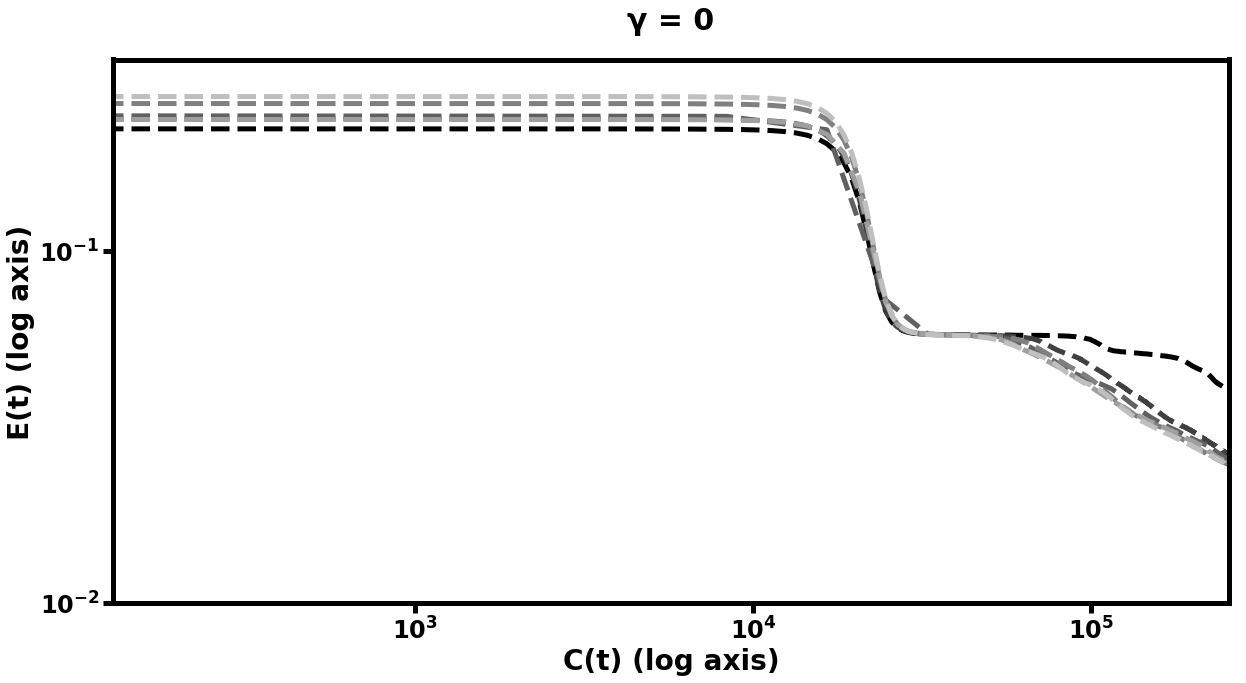} \\
	\includegraphics[width=\tplotfactor\linewidth]{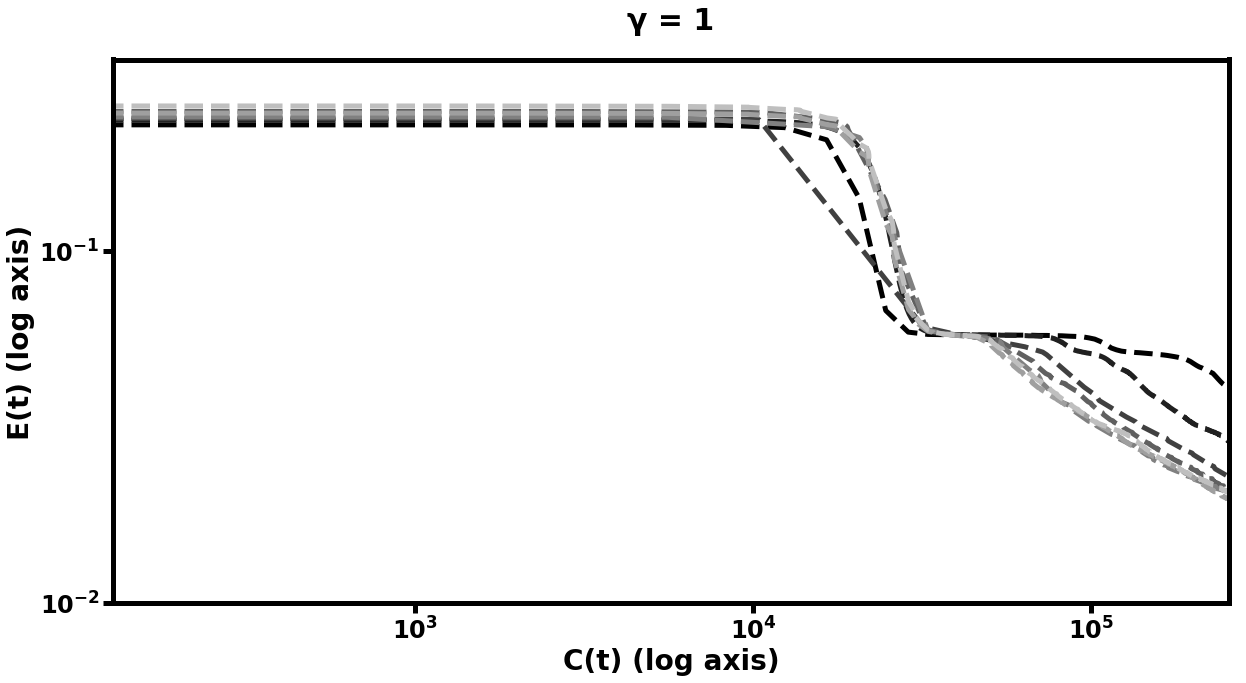} \\
	\includegraphics[width=\tplotfactor\linewidth]{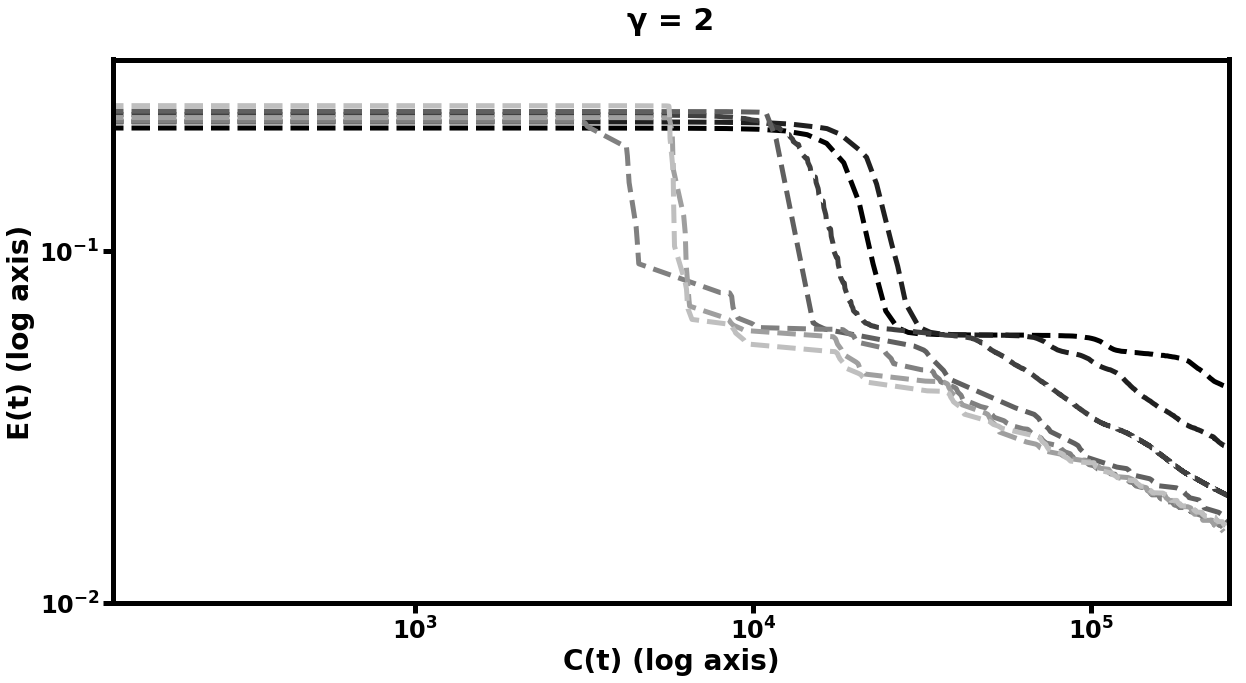} \\
	\includegraphics[width=\tplotfactor\linewidth]{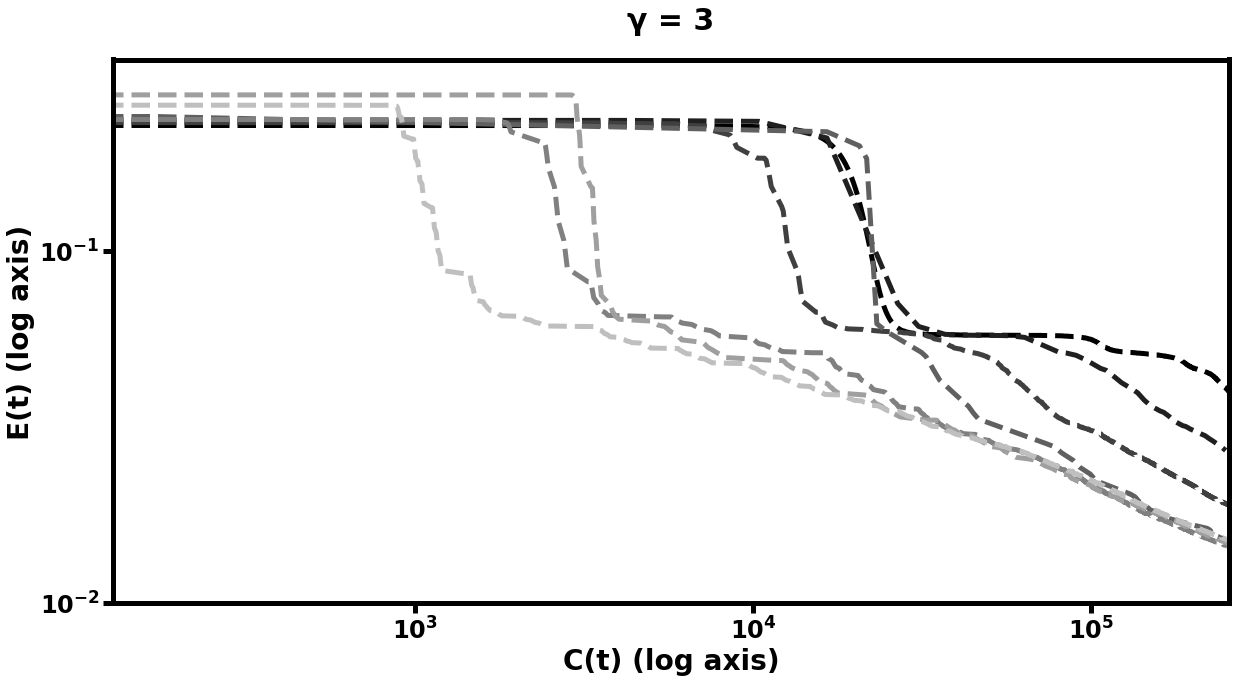} \\
\end{minipage} \\
\caption{Log-log plots of mean-squared error $E(t)$ on MNIST autoencoding task as a function of computational cost $C(t)$; the left plots represent multigrid performed with path graph prolongations for each layer while the right plots used grid-based prolongation. While both approaches show gains over default learning in both speed of learning and final error value, the one which respects the spatial structure of the input data improves more rapidly. Subplot explanations are the same as in Figure \ref{fig:autofig}.}
\end{figure}

\begin{table}[h]%
\resizebox{\columnwidth}{!}{%
\begin{tabular}{|r || S[table-parse-only] | S[table-parse-only] | S[table-parse-only] | l |}
\hline
\multicolumn{5}{|c|}{Path-Based $P$ Matrices} \\
\hline
 & {Best MsANN} & {Worst MsANN} & {Default} & Best MsANN params\\
\hline
Final MSE & 1.547e-02 & 4.605e-02 & 4.171e-02 & $(\gamma = 3, L = 4, k = 008)$\\
\hline
Cost to $\frac{1}{10}$ MSE & 7.207e+04 & N/A & N/A & $(\gamma = 3, L = 5, k = 032)$\\
\hline
\multicolumn{5}{|c|}{Grid-Based $P$ Matrices} \\
\hline
 & {Best MsANN} & {Worst MsANN} & {Default} & Best MsANN params\\
\hline
Final MSE & 1.436e-02 & 4.620e-02 & 4.132e-02 & $(\gamma = 3, L = 4, k = 002)$\\
\hline
Cost to $\frac{1}{10}$ MSE & 5.095e+04 & N/A & N/A & $(\gamma = 3, L = 6, k = 128)$\\
\hline
\end{tabular}%

}
\caption{Best performance (on validation dataset for the MNIST autoencoding task). See Table \ref{tbl:oneobj}. Upper section represents scores attained by a MsANN with path-based prolongation, lower section represents grid-based prolongation. Entries marked N/A did not reach $\frac{1}{10}$ of their initial error during training.}
\label{tbl:mnist}
\end{table}

\subsection{Experiments of Choice of $P$}
\label{subsec:p_choice}
To further explore the role of the structure of $P$ in these machine learning models, we compare the performance of several MsANN models with $P$ generated according to various strategies. Our initial experiment on the MNIST dataset used the exact same hierarchical network structure and prolongation/restriction operators as the example with 1D data, and yielded marginal computational benefit. We were thus motivated to try this learning task with prolongations which are designed for for 2D grid-based model architectures, as well as trying unstructured (random orthogonal) matrices as a baseline. More precisely, our 1D experiments used $P$ matrices resembling those in column 3 of the ``Cycle Graphs" section of Figure \ref{fig:pspeciesfig}. We instead, for the MNIST task, used $P$ matrices like those in column $6$ of the ``Grid Graphs'' section of the same figure. In Figure \ref{fig:pcomp_mnist}, we illustrate the difference in these choices for the MNIST training task, with the same choice of multigrid training parameters: $(L = 6, \gamma = 3, k = 1)$. We compare the following strategies for generating $P$:
\begin{enumerate}
    \item \label{misc:plist_elem_0} As local optima of a prolongation problem between 1D grids, with periodic boundary conditions;
    \item \label{misc:plist_elem} As local optima of a prolongation problem between 2D grids, with periodic boundary conditions;
    \item \label{misc:plist_elem_2} As in \ref{misc:plist_elem}, but shuffled along the first index of the array.
\end{enumerate}
Strategy \ref{misc:plist_elem_2} was chosen to provide the same degree of connectivity between each coarse variable and its related fine variables as strategy \ref{misc:plist_elem}, but in random order i.e. connected in a way which is unrelated to the 2D correlation between neighboring pixels.
We see in Figure \ref{fig:pcomp_mnist} that the two strategies utilizing local optima outperform both the randomized strategy and default training (training only the finest scale). Furthermore, strategy \ref{misc:plist_elem} outperforms strategy \ref{misc:plist_elem_0}, although the latter eventually catches up at the end of training, when coarse-scale weight training has diminishing marginal returns. The random strategy is initially on par with the two optimized ones (we speculate that this is due to the ability to affect many fine-scale variables at once, even in random order, which may make the gradient direction easier to travel), but eventually falls behind, at times being less efficient than default training. We leave for further work the question of whether there are choices of prolongation problem which are even more efficient for this machine learning task. We also compare all of the preceeding models to a model which has the same structure as a MsANN model (a hierarchy of coarsened variables with $\text{Pro}$ and $\text{Res}$ operators between them), but which was trained by training all variables in the model simultaneously. This model performs on par with the default model, illustrating the need for the multilevel training schedule dictated by the choice of $\gamma$.

\begin{figure}
\begin{minipage}[c]{.64\linewidth}
    \includegraphics[width=\linewidth]{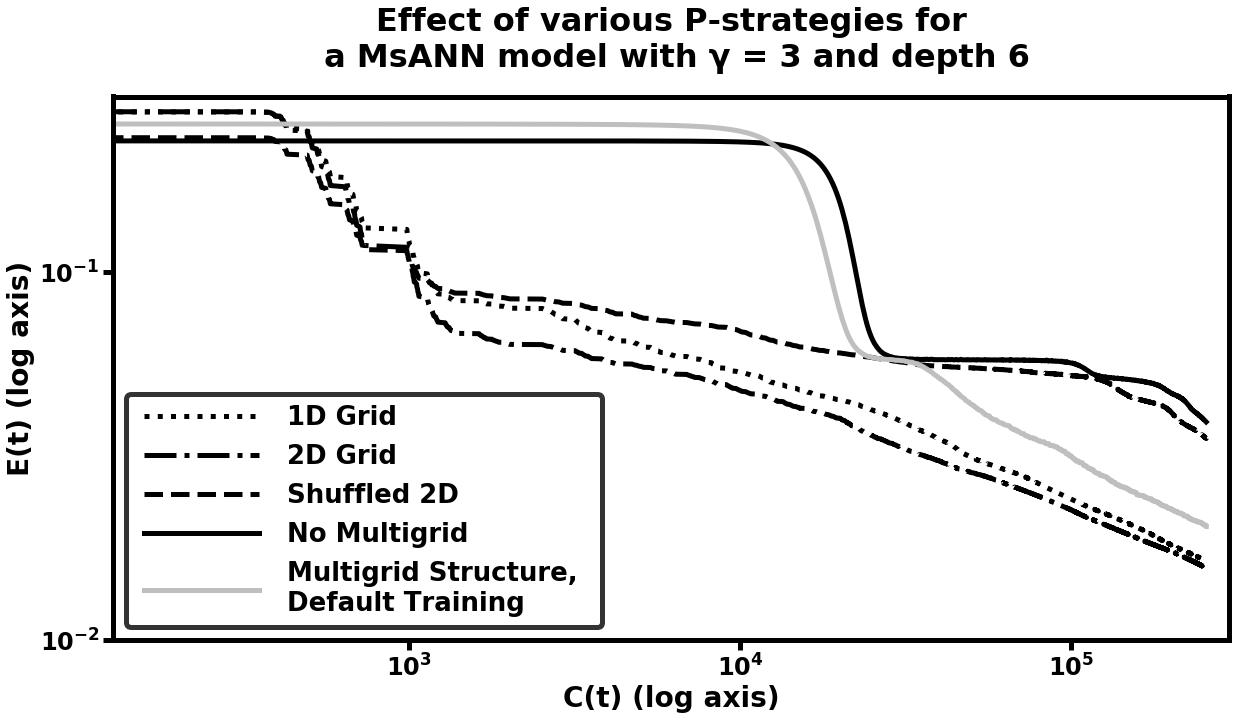}
\end{minipage}
\begin{minipage}[c]{.33\linewidth}
    \caption{Comparison of several choices of $\text{Pro}$ and $\text{Res}$ operators for a Multiscale Neural Network training experiment, on MNIST data. Two choices for $P$ which are local optima of prolongation problems demonstrate more efficient training than default, while two strategies perform worse: multigrid training with random $P$ matrices, and training all varibles in the hierarchy simultaneously.}
    \label{fig:pcomp_mnist}
\end{minipage}
\end{figure}

\subsection{Summary}
We see uniform improvement (as the parameters $L$ and $\gamma$ are increased) in the rate of neural network learning when models are stacked in the type of multiscale hierarchy we define in equations \ref{eqn:weightpro} and \ref{eqn:biaspro}, despite the diversity of machine learning tasks we examine. Furthermore, this improvement is marked: the hierarchical models both learn more rapidly than training without multigrid and have final error lower than the default model. In many of our test cases, the hierarchical models reached the same level of MSE as the default in more than an order of magnitude fewer training examples, and continued to improve, surpassing the final level of error reached by the default network. Even in the worst case, our hierarchical model structure performed on par with neural networks which did not incorporate our weight prolongation and restriction operators. We leave the question of finding optimal $(L, \gamma, k)$ for future work - see Section \ref{sec:future} for further discussion. Finally, we note that the model(s) in the experiments presented in section \ref{subsec:auto} were essentially the same MsANN models (same set of $L, \gamma, k$ and same set of $P$ matrices), and showed similar performance gains on two different machine vision problems, indicating that it may be possible to develop general MsANN model-creation procedures that are applicable to a variety of problems (rather than needing to be hand-tuned). 

\section{Upper Bounds for Diffusion Term}
\label{sec:theoryshortsec}
In this section, we consider two theoretical concerns: 
\begin{enumerate}
\item Invariance in Frobenius norm of diffusion term solutions under transformation to a spectral basis; and
\item Decoupling a prolongation problem between graph products into a sum of prolongation problems of the two sets of graph factors.
\end{enumerate}
We will here rely heavily on various properties of the Kronecker sum and product of matrices which may be found in \cite{hogben2006handbook}, Section 11.4. 
\subsection{Invariance of objective function evaluation of \textit{P} under eigenspace transformation}
\label{subsec:freqproof}
For the purpose of the calculations in this section, we restrict ourselves to the ``diffusion'' term of our objective function \ref{eqn:objfunction} (the term which coerces two diffusion processes to agree), which we will write as 
\begin{align}
\label{eqn:distancedefn}
D_{P,\alpha} \left( G_1, G_2 \right) &= \left| \left| \frac{1}{\sqrt{\alpha}} P L_1 -  \sqrt{\alpha} L_2 P \right| \right|_F .
\end{align}
Because $L_1$ and $L_2$ are each real and symmetric, they may both be diagonalized as  $L_i = U_i \Lambda_i U_i^T$ where $U_i$ is a rotation matrix and $\Lambda_i$ is a diagonal matrix with the eigenvalues of $L_i$ on the diagonal. Substituting into \ref{eqn:distancedefn}, and letting $\tilde{P} = U_2^T P U_1$, we have
\begin{align}
D_{P,\alpha} \left( G_1, G_2 \right) &= \left| \left| \frac{1}{\sqrt{\alpha}} P L_1 -  \sqrt{\alpha} L_2 P \right| \right|_F \nonumber \\
&= \left| \left| \frac{1}{\sqrt{\alpha}} P U_1 \Lambda_1 U_1^T -  \sqrt{\alpha} U_2 \Lambda_2 U_2^T P \right| \right|_F \nonumber \\
&= \left| \left| \frac{1}{\sqrt{\alpha}} \left( U_2^T P U_1 \right) \Lambda_1 -  \sqrt{\alpha} \Lambda_2 \left( U_2^T P U_1 \right) \right| \right|_F \nonumber \\
&= \left| \left| \frac{1}{\sqrt{\alpha}} \tilde{P} \Lambda_1 -  \sqrt{\alpha} \Lambda_2 \tilde{P} \right| \right|_F \label{eqn:spectral_form}
\end{align}
where $\tilde{P}$ is an orthogonal matrix $\tilde{P}^T \tilde{P} = I$ if and only if $P$ is. Since the Frobenius norm is invariant under multiplication by rotation matrices, \ref{eqn:spectral_form} is a re-formulation of our original Laplacian matrix objective function in terms of the spectra of the two graphs. 
Optimization of this modified form of the objective function subject to orthogonality constraints on $P$ is upper-bounded by optimization over matchings of eigenvalues: for any fixed $\alpha$ the eigenvalue-matching problem has the same objective function, but our optimization is over all real valued orthogonal $P$. The orthogonality constraint is a relaxed version of the constraints on matching problems (Equation \ref{eqn:matchingconstraints}) discussed in subsection \ref{subsub:definitions}, since matching matrices M are also orthogonal $(M^T M = I)$. Many algorithms exist for solving the inner partial and 0-1 constrained minimum-cost assignment problems, such as the Munkres algorithm \cite{munkres1957algorithms} (also in subsection \ref{subsub:definitions}). 

We note three corollaries of the above argument. Namely, because the Frobenius norm is invariant under the mapping to and from eigenspace:
\begin{enumerate}
\item Optimal or near-optimal $\tilde{P}$ in eigenvalue-space maintain their optimality through the mapping $U_2 \cdot U_1^T$ back to graph-space.
\item Solutions which are within $\epsilon$ of the optimum in $\tilde{P}$-space are also within $\epsilon$ of the optimum in $P$-space; and
\item More precisely, if they exist, zero-cost eigenvalue matchings correspond exactly with zero-cost $P$. 
\end{enumerate}

A natural next question would be why it might be worthwhile to work in the original graph-space, rather than always optimizing this simpler eigenvalue-matching problem instead. In many cases (path graphs, cycle graphs) the spectrum of a member $G_l$ of a graph lineage is a subset of that of $G_{l+1}$, guaranteeing that zero-cost eigenvalue matchings (and thus, by the argument above, prolongations with zero diffusion cost) exist. However, when this is not the case, the above argument only upper bounds the true distance, since the matching problem constraints are more strict. Thus, numerical optimization over $P$, with orthogonality constraints only, may find a better bound on $D^{P,\alpha} \left( G_{l},  G_{l+1} \right)$.
\subsection{Decomposing Graph Product Prolongations}
\label{subsec:deompose}
We next consider the problem of finding optimal prolongations between two graphs $\mathbf{G}_\Box^{(1)} = G^{(1)}_1 \Box G^{(2)}_1$ and $\mathbf{G}_\Box^{(2)} = G^{(1)}_2 \Box G^{(2)}_2$ when optimal prolongations are known between $G^{(1)}_1$ and $G^{(1)}_2$, and $G^{(2)}_1$ and $G^{(2)}_2$. We show that under some reasonable assumptions, these two prolongation optimizations decouple - we may thus solve them separately and combine the solutions to obtain the optimal prolongations between the two product graphs. 

From the definition of graph box product, we have 
\begin{align*}
L_\Box^{(j)} &= L(G_1^{(j)} \Box G_2^{(j)}) \\
		     &= A(G_1^{(j)} \Box G_2^{(j)}) - D(G_1^{(j)} \Box G_2^{(j)}) \\
             &= \left( A(G_1^{(j)}) \otimes I_2^{(j)}  + I_1^{(j)} \otimes A(G_2^{(j)}) \right) - \left( D(G_1^{(j)}) \otimes I_2^{(j)}  + I_1^{(j)} \otimes D(G_2^{(j)}) \right) \\
             &= \left( A(G_1^{(j)}) \otimes I_2^{(j)} -D(G_1^{(j)}) \otimes I_2^{(j)} \right) - \left(I_1^{(j)} \otimes A(G_2^{(j)})  - I_1^{(j)} \otimes D(G_2^{(j)}) \right) \\
             &= (L_1^{(j)} \otimes I_2^{(j)})  + (I_1^{(j)} \otimes L_2^{(j)}) \\
             &= L(G_1^{(j)}) \oplus L(G_2^{(j)})
\end{align*}%
where $\oplus$ is the Kronecker sum of matrices as previously defined. See \cite{fiedler1973algebraic}, Item 3.4 for more details on Laplacians of graph products. We calculate

\begin{align*}
D^{P,\alpha} \left( G_\Box^{(1)},  G_\Box^{(2)} \right) &= \left| \left| \frac{1}{\sqrt{\alpha}} P L_\Box^{(1)} -  \sqrt{\alpha} L_\Box^{(2)} P \right| \right|_F\\
&= \left| \left| \frac{1}{\sqrt{\alpha}} P \left( \left( L_1^{(1)} \otimes I_2^{(1)} \right) + \left(I_1^{(1)} \otimes L_2^{(1)}\right) \right) \right. \right. \\
& \left. \left. \qquad -  \sqrt{\alpha} \left( \left( L_1^{(2)} \otimes I_2^{(2)}\right) + \left(I_1^{(2)} \otimes L_2^{(2)} \right) \right) P \right| \right|_F \\
&= \left| \left| \left( \frac{1}{\sqrt{\alpha}} P \left(L_1^{(1)} \otimes I_2^{(1)}\right) - \sqrt{\alpha}\left(L_1^{(2)} \otimes I_2^{(2)}\right)  P \right) \right. \right. \\
& \left. \left. \qquad + 
\left( \frac{1}{\sqrt{\alpha}} P \left( I_1^{(1)} \otimes L_2^{(1)} \right) - \sqrt{\alpha} \left( I_1^{(2)} \otimes L_2^{(2)} \right) P\right) \right| \right|_F \\
\intertext{Now we try out the assumption that $P = P_1 \otimes P_2$, which restricts the search space over $P$ and may increase the objective function:}
D^{P,\alpha} \left( G_\Box^{(1)},  G_\Box^{(2)} \right) &= \left| \left| \left[ \frac{1}{\sqrt{\alpha}} \left( P_1 \otimes P_2 \right)\left(L_1^{(1)} \otimes I_2^{(1)}\right)  \right. \right. \right. \\
& \qquad \qquad - \left. \sqrt{\alpha}\left(L_1^{(2)} \otimes I_2^{(2)}\right)  \left( P_1 \otimes P_2 \right) \right] \\
& \qquad + 
\left[ \frac{1}{\sqrt{\alpha}} \left( P_1 \otimes P_2 \right) \left( I_1^{(1)} \otimes L_2^{(1)} \right) \right. \\
& \left. \left. \left. \qquad \qquad - \sqrt{\alpha} \left( I_1^{(2)} \otimes L_2^{(2)} \right) \left( P_1 \otimes P_2 \right) \right] \right| \right|_F \\
&= \left| \left| \left( \frac{1}{\sqrt{\alpha}} \left( P_1 L_1^{(1)} \otimes P_2 \right) - \sqrt{\alpha}\left(L_1^{(2)}  P_1 \otimes P_2 \right) \right) \right. \right. \\
& \left. \left. \qquad + 
\left( \frac{1}{\sqrt{\alpha}} \left( P_1 \otimes P_2  L_2^{(1)} \right) - \sqrt{\alpha} \left( P_1 \otimes L_2^{(2)} P_2 \right) \right) \right| \right|_F \\
&= \left| \left| \left(  \left( \frac{1}{\sqrt{\alpha}} P_1 L_1^{(1)}  - \sqrt{\alpha} L_1^{(2)}  P_1 \right) \otimes P_2 \right) \right. \right. \\
& \left. \left. \qquad + 
\left(  P_1 \otimes \left( \frac{1}{\sqrt{\alpha}} P_2  L_2^{(1)} - \sqrt{\alpha} L_2^{(2)} P_2 \right) \right) \right| \right|_F
\intertext{Since $|| A + B ||_F \leq ||A||_F + ||B||_F$,}
&\leq \left| \left| \left(  \left( \frac{1}{\sqrt{\alpha}} P_1 L_1^{(1)}  - \sqrt{\alpha} L_1^{(2)}  P_1 \right) \otimes P_2 \right) \right| \right| \\
& \qquad + 
\left| \left| \left(  P_1 \otimes \left( \frac{1}{\sqrt{\alpha}} P_2  L_2^{(1)} - \sqrt{\alpha} L_2^{(2)} P_2 \right) \right) \right| \right|_F \\
&= \left| \left| \frac{1}{\sqrt{\alpha}} P_1 L_1^{(1)}  - \sqrt{\alpha} L_1^{(2)}  P_1 \right| \right|_F \left| \left| P_2 \right| \right|_F \\
& \qquad + 
\left| \left| P_1 \right| \right|_F \left| \left| \frac{1}{\sqrt{\alpha}} P_2  L_2^{(1)} - \sqrt{\alpha} L_2^{(2)} P_2 \right| \right|_F , \\
\intertext{Thus assuming $P = P_1 \otimes P_2$}
D^{P,\alpha} \left( G_\Box^{(1)},  G_\Box^{(2)} \right) &\leq \left| \left| \tilde{P}_2 \right| \right|_F D_{\alpha, P_1} \left(G_1^{(1)},G_1^{(2)} \right) \\
& \qquad + \left| \left| \tilde{P}_1 \right| \right|_F D_{\alpha, P_2} \left( G_2^{(1)},G_2^{(2)} \right), %
\end{align*}%
which is a weighted sum of objectives of the optimizations for prolongation from $G_1^{(1)}$ to $G_1^{(2)}$ and $G_2^{(1)}$ to $G_2^{(2)}$. Recall that our original constraint on $P$ was that $P^T P = I$; since $P = P_1 \otimes P_2$ this is equivalent (by a property of the Kronecker product; see Corollary 13.8 in \cite{laub2005matrix})  to the coupled constraints on $P_1$ and $P_2$:
\begin{align}
\label{eqn:p1p2const}
\left( {P_1}^T P_1 = \frac{1}{\eta}  I_1^{(1)} \right) \qquad \wedge \qquad \left( {P_2}^T P_2 = \eta I_2^{(1)} \right)
\end{align}
for some $\eta \in \mathbb{R}$. For any $P_1, P_2$ which obey \ref{eqn:p1p2const}, we may rescale them by $\eta$ to make them orthogonal without changing the value of the objective, so we take $\eta = 1$ in subsequent calculations.  Noting that $||A||_F = \sqrt{\text{Tr}(A^T A)}$, we see that 
\begin{align*}
\left| \left| P_1 \right| \right|_F = \sqrt{\text{Tr}( I_1^{(1)})} = \sqrt{n_1^{(1)}} \quad \text{and similarly} \quad \left| \left| P_2 \right| \right|_F  = \sqrt{ n_2^{(1)}}.
\end{align*}

Thus, we have proven the following:
\begin{theorem}
\label{thm:decompthm}
Assuming that $P$ decomposes as $P = P_1 \otimes P_2$, the diffusion distance $D_{P, \alpha} \left(G_\Box^{(1)}, G_\Box^{(2)} \right)$ between $G_\Box^{(1)}$ and $G_\Box^{(2)}$ is bounded above by the strictly monotonically increasing function of the two distances $D_{P_1,\alpha}$ and  $D_{P_2,\alpha}$:
\begin{align*} \mathcal{F}(D_{P_1,\alpha}, D_{P_2,\alpha}) &= \sqrt{n_2^{(1)}} D_{P_1,\alpha}  + \sqrt{n_1^{(1)}} D_{P_2,\alpha} ,
\intertext{Namely,}
D_{P, \alpha} \left(G_\Box^{(1)}, G_\Box^{(2)} \right) &\leq \mathcal{F} \left(  D_{P_1,\alpha} \left( G_1^{(1)},  G_1^{(2)} \right),  D_{P_2,\alpha} \left( G_2^{(1)},  G_2^{(2)} \right) \right)
\end{align*}
\end{theorem}
Thus, the original optimization over the product graphs decouples into separate optimizations over the two sets of factors, constrained to have the same value of $\alpha$. Additionally, since the requirement that $P = P_1 \otimes P_2$ is an additional constraint,
\begin{corollary}
\label{thm:decompcorollary}
If $(\alpha_1, P_1)$ and $(\alpha_2, P_2)$, subject to orthogonality constraints, are optima of $D_{\alpha, P} \left( G_1^{(1)}, G_1^{(1)} \right)$ and $D_{\alpha, P} \left( G_2^{(1)}, G_2^{(1)} \right)$, and furthermore if $\alpha_1 = \alpha_2$, then the value of $D_{P, \alpha} (G_1^{(1)} \Box G_2^{(1)}, G_1^{(2)} \Box G_2^{(2)})$ for an optimal $P$ is bounded above by $D_{P_1 \otimes P_2, \alpha_1} (G_1^{(1)} \Box G_2^{(1)}, G_1^{(2)} \Box G_2^{(2)})$.
\end{corollary}
This upper bound on the original objective function is a monotonically increasing function of the objectives for the two smaller problems. 
A consequence of this upper bound is that if ${D_{P_1,\alpha} \left( G_1^{(1)},  G_1^{(2)} \right) \leq \epsilon_1}$ and ${D_{P_2,\alpha} \left( G_2^{(1)},  G_2^{(2)} \right) \leq \epsilon_2}$, then the composite solution $P_1 \otimes P_2$ must have ${D_{P_1 \otimes P_2,\alpha} \left( G_\Box^{(1)},  G_\Box^{(2)} \right) \leq \epsilon = \left( \sqrt{n_1} + \sqrt{n_2} \right) \max(\epsilon_1, \epsilon_2)}$. Thus if both of these distances are arbitrarily small then the composite distance must also be small. Furthermore, if only one of these is small, so that ${D_{P_1,\alpha} \left( G_1^{(1)},  G_1^{(2)} \right) \approx 0}$ or ${D_{P_2,\alpha} \left( G_2^{(1)},  G_2^{(2)} \right) \approx 0}$, then ${D_{P_1 \otimes P_2,\alpha}  \approx D_{P_2, \alpha}}$ or ${D_{P_1 \otimes P_2,\alpha}  \approx D_{P_1, \alpha}}$, respectively.

We have experimentally found that many families of graphs do not require scaling between the two diffusion processes: the optimal $(\alpha, P)$ pair has $\alpha = 1$. In particular, prolongation between path (cycle) graphs of size $n$ and size $2n$ always have $\alpha_\text{optimal} = 1$, since the spectrum of the former graph is a subset of that of the larger - therefore, there is a matching solution of cost 0 which by the argument above can be mapped to a graph-space $P$ with objective function value 0 (we prove this in Section \ref{sec:zeroerrorcycle} of the Supplementary Material to this paper). In this case, the two terms of the upper bound are totally decoupled and may each be optimized separately (whereas in the form given above, they both depend on a $\alpha$).

\section{Conclusion and Future Work}
\label{sec:future}
We have introduced a novel method for multiscale modeling, which relies on a novel prolongation and restriction operator to move between models in a hierarchy. These prolongation and restriction operators are the optima of an objective function we introduce which is a natural distance metric on graphs and graph lineages. We prove several important properties of this objective function, including an upper bound which allows us to decouple a difficult optimization into two smaller optimization problems under certain circumstances. 

Additionally, we demonstrate an algorithm which makes use of such $P$ and $R$ operators to simultaneously train models in a hierarchy of neural networks (specifically, autoencoder neural networks). This Multiscale Artificial Neural Network (MsANN) approach statistically outperforms training only at the finest scale, achieving lower error than the default model and also reaching the default model's best performance in an order of magnitude fewer training examples. While in our experiments we saw uniform improvement as the parameters $\gamma$, $k$, and $L$ were increased (meaning that the hierarchy is deeper, and the model spends more relative time training at the coarser scales), this may not always be the case, and we leave the question of finding optimal settings of these parameters for future work. 

Future work will also focus on investigating the properties of the distance metric on graphs, and the use of those properties in graph lineage, as well as modifying the MsANN algorithm to perform the same type of hierarchical learning on more complicated ANN models, such as Convolutional Neural Networks (CNNs), as well as non-autoencoding tasks, for example classification.

\section*{Acknowledgments}
\ackinfo

\newpage
\bibliographystyle{siamplain}
{\tiny
\bibliography{references}}
\end{document}


\maketitle

\section{Existence of Zero-Error $P$ for Cycle Graphs}
\label{sec:zeroerrorcycle}
\begin{theorem}
\label{theorem:spectralzero}
Let $G^{(1)}$ and $G^{(2)}$ be graphs with spectra $\lambda^{(1)}_i$ and $\lambda^{(2)}_j$, respectively, with $n_1 = \left| G^{(1)} \right| \leq n_2 = \left| G^{(2)} \right|$. Suppose that for every $\lambda^{(1)}_i = r$ of multiplicity $k$, $r$ is also an eigenvalue of $G^{(2)}$ of multiplicity $\geq k$. Then there is a zero-cost eigenvalue matching $M$ between $G^{(1)}$ and $G^{(2)}$.
\end{theorem}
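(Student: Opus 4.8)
The plan is to prove the statement by a direct constructive counting argument: group the eigenvalues of $G^{(1)}$ by their common value, and match each such group into the (by hypothesis large enough) block of equal eigenvalues of $G^{(2)}$. Since a zero-cost matching is precisely one in which every matched pair $(\lambda^{(1)}_i, \lambda^{(2)}_{M(i)})$ satisfies $\lambda^{(1)}_i = \lambda^{(2)}_{M(i)}$, it suffices to exhibit an injection from the multiset of eigenvalues of $G^{(1)}$ into that of $G^{(2)}$ that preserves value.

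First I would list the distinct eigenvalues of $G^{(1)}$ as $r_1, \dots, r_m$, with multiplicities $k_1, \dots, k_m$, so that $\sum_{\ell=1}^m k_\ell = n_1$. For each $\ell$, the hypothesis guarantees that $r_\ell$ occurs in the spectrum of $G^{(2)}$ with multiplicity at least $k_\ell$; I would fix any $k_\ell$ of these occurrences and define $M$ to send the $k_\ell$ copies of $r_\ell$ in $G^{(1)}$ bijectively onto them. The key observation is then that eigenvalue-slots of $G^{(2)}$ belonging to distinct values $r_\ell \neq r_{\ell'}$ are disjoint, so the images chosen for different groups never collide; the union of the $m$ partial bijections is therefore a single injection defined on all $n_1 = \sum_\ell k_\ell$ eigenvalues of $G^{(1)}$, which is legitimate because $n_1 \leq n_2$ leaves enough room. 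Every pair it matches has equal value by construction, so each contributes zero to the matching cost and the total cost is zero.

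As an alternative packaging, I would note that the same conclusion follows from Hall's marriage theorem applied to the bipartite graph joining each eigenvalue of $G^{(1)}$ to every equal-valued eigenvalue of $G^{(2)}$: for any subset $S$ of $G^{(1)}$'s eigenvalues, its neighborhood is the union of the full $G^{(2)}$-blocks of the values appearing in $S$, whose total size is at least $|S|$ by the multiplicity hypothesis, so Hall's condition holds and a $G^{(1)}$-saturating (hence zero-cost) matching exists.

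I do not expect a genuine obstacle here. The only points requiring care are recalling that ``zero cost'' means exactly equality of the matched eigenvalues, and verifying injectivity across the value-groups — and the latter is precisely what the per-eigenvalue multiplicity hypothesis is designed to supply. The condition $n_1 \leq n_2$ plays no independent role beyond being implied by the multiplicity assumption summed over all distinct values.
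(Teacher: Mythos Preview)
Your proposal is correct and follows essentially the same constructive approach as the paper: build an injection from the eigenvalues of $G^{(1)}$ into those of $G^{(2)}$ that preserves value, yielding a $\{0,1\}$ (sub)permutation matrix with zero cost. Your argument is in fact slightly more explicit than the paper's about \emph{why} such an injection exists (via the per-value multiplicity grouping), and the Hall's theorem remark is an optional extra not present in the original.
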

\begin{proof}
Let $(i_1, j_1), (i_1, j_1) \ldots (i_{n_1}, j_{n_1})$ be a list of pairs of indices such that the following hold:
\begin{itemize}
\item All of the $i_k$ are unique. 
\item All of the $j_k$ are unique.
\item For any pair $(i_k, j_k)$, $\lambda^{(1)}_k = \lambda^{(2)}_k$.
\end{itemize}
Define $P$ as follows:
\[
P_{ij} = 	\begin{cases} 
      			1 & (i,j) \text{appears in the above list.} \\
      			0 & \text{else.} \\
   			\end{cases}
\]
$P$ is clearly orthogonal, since it has exactly one $1$ in each row and each column and zeros elsewhere ($P$ is a permutation matrix for $n_1 = n_2$ and a \emph{subpermutation} matrix otherwise). Furthermore, we must have 
\[\sum_{i=1}^{n_2} \sum_{j=1}^{n_1} P_{ij} \left( \lambda^{(1)}_j - \lambda^{(2)}_i \right) = 0\]
and therefore
\begin{align}
\label{eqn:p_zero_cost}
\left|  \left| P \Lambda^{(1)} - \Lambda^{(2)} P \right| \right|_F = 0 \end{align}
\end{proof}

\begin{corollary}
For any $n$, there exist zero-error matchings between $C_n$ and $C_{2n}$.
\end{corollary}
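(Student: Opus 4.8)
The plan is to reduce the corollary to a direct application of Theorem~\ref{theorem:spectralzero}. Since $n = |C_n| \le 2n = |C_{2n}|$, it suffices to verify the spectral hypothesis of that theorem, namely that every eigenvalue of $C_n$ of multiplicity $k$ is an eigenvalue of $C_{2n}$ of multiplicity at least $k$; the theorem then produces the desired zero-cost matching with no further work.

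First I would write down the spectra explicitly. The eigenvalues of the cycle graph $C_m$ are $\mu^{(m)}_k = 2\cos(2\pi k/m)$ for $k = 0,1,\dots,m-1$ (or the Laplacian variants $2 - 2\cos(2\pi k/m)$; the argument is identical, since both depend only on $\cos(2\pi k/m)$). The key observation is the doubling identity $2\cos(2\pi k/n) = 2\cos\!\big(2\pi(2k)/(2n)\big)$, so the index map $k \mapsto 2k$ sends each eigenvalue index of $C_n$ to an eigenvalue index of $C_{2n}$ that produces the same value. As $k$ ranges over $\{0,\dots,n-1\}$, the image $2k$ ranges over the even residues in $\{0,\dots,2n-1\}$, which are all valid and distinct. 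Hence every eigenvalue of $C_n$ occurs among the eigenvalues of $C_{2n}$.

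Next I would account for multiplicities. In $C_m$ the symmetry $k \leftrightarrow m-k$ shows that each interior eigenvalue occurs with multiplicity two, while the extreme values ($r = 2$ at $k = 0$, and $r = -2$ at $k = m/2$ when $m$ is even) occur with multiplicity one. For an interior eigenvalue $r = 2\cos(2\pi k/n)$ of $C_n$, the two indices $2k$ and $2n - 2k$ of $C_{2n}$ are distinct and both yield $r$, so its multiplicity in $C_{2n}$ is at least two. The value $r = 2$ has multiplicity one in both graphs, and $r = -2$ (present only when $n$ is even) corresponds to the single index $m = n$ in $C_{2n}$, again of multiplicity one. In every case the multiplicity in $C_{2n}$ is at least the multiplicity in $C_n$, so the hypothesis of Theorem~\ref{theorem:spectralzero} is satisfied and the conclusion follows immediately.

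The main obstacle is the bookkeeping of multiplicities at the boundary of the spectrum: one must confirm that the doubling map does not collapse the two distinct preimages of an interior eigenvalue, and that the simple eigenvalues $\pm 2$ do not acquire a \emph{smaller} multiplicity in $C_{2n}$. Both concerns reduce to elementary parity checks on the indices rather than to any genuine spectral difficulty, so once the explicit formulas and the $k \mapsto 2k$ embedding are in hand the proof is routine.
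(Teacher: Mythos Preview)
Your proposal is correct and follows essentially the same route as the paper: write down the cycle spectrum $2\cos(2\pi j/m)$, use the index map $j\mapsto 2j$ to embed the spectrum of $C_n$ into that of $C_{2n}$, and invoke Theorem~\ref{theorem:spectralzero}. The paper is terser---it asserts the hypothesis holds ``clearly'' and then exhibits the explicit matching matrix $P_{ij}=\mathbf{1}[i=2j]$---whereas you carry out the multiplicity bookkeeping in detail; but the underlying argument is the same.
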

\begin{proof}
The spectra of $C_n$ are given by the formula (see \cite{hogben2006handbook} Section 39.3):
\[ \lambda(C_n) = 2 \cos (\frac{2 \pi j}{n}) \qquad \text{for} \qquad (j=0, 1, \ldots n-1) \]
Thus $\lambda(C_n)$ and $\lambda(C_{2n})$ clearly satisfy the conditions of Theorem \ref{theorem:spectralzero} above. In particular, the matrix 
\[
P_{ij} = 	\begin{cases} 
      			1 & \text{if} \quad i = 2j \\
      			0 & \text{else.} \\
   			\end{cases}
\] has 0 cost as in Equation \ref{eqn:p_zero_cost}
\end{proof}

\section{Spectral Version of Decoupling for the Diffusion Term of Graph Product Prolongations}
\begin{theorem}
The eigenspace version of the diffusion term of the objective function of a graph product prolongation also decouples into two smaller prolongation problems.
\end{theorem}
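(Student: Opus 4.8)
The plan is to exploit the Kronecker-product structure that a graph product forces on both the diffusion operator and the prolongation, and thereby reduce the product objective to the single-factor objective whose cost is minimized in Theorem~\ref{theorem:spectralzero}. First I would fix the diffusion operator to be the (Cartesian) graph-product Laplacian, which is diagonalized by tensor products of the factor eigenvectors. Writing the coarse product graph as $G_A \times G_B$ with factor eigenvalue matrices $\Lambda_A^{c},\Lambda_B^{c}$, the product diffusion operator is, in the eigenbasis, the diagonal matrix $\Lambda^{c} = \Lambda_A^{c}\otimes I + I\otimes\Lambda_B^{c}$, and likewise $\Lambda^{f} = \Lambda_A^{f}\otimes I + I\otimes\Lambda_B^{f}$ for the fine product graph $H_A\times H_B$. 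The prolongation that respects the product structure is the Kronecker product $P=P_A\otimes P_B$, where $P_A,P_B$ are the (sub)permutation prolongations of the individual factors.

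Next I would substitute $P=P_A\otimes P_B$ into the diffusion residual $R:=P\Lambda^{c}-\Lambda^{f}P$ and apply the mixed-product identity $(X\otimes Y)(Z\otimes W)=(XZ)\otimes(YW)$. The two additive pieces of $\Lambda^{c}$ and $\Lambda^{f}$ then sort themselves into
\[
R = \bigl(P_A\Lambda_A^{c}-\Lambda_A^{f}P_A\bigr)\otimes P_B \;+\; P_A\otimes\bigl(P_B\Lambda_B^{c}-\Lambda_B^{f}P_B\bigr) \;=\; E_A\otimes P_B + P_A\otimes E_B,
\]
where $E_A:=P_A\Lambda_A^{c}-\Lambda_A^{f}P_A$ and $E_B:=P_B\Lambda_B^{c}-\Lambda_B^{f}P_B$ are exactly the single-factor diffusion residuals whose Frobenius norm is the cost in Theorem~\ref{theorem:spectralzero}. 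Expanding the squared norm with $\|X\otimes Y\|_F=\|X\|_F\|Y\|_F$ and $\langle X\otimes Y,Z\otimes W\rangle_F=\langle X,Z\rangle_F\langle Y,W\rangle_F$ yields
\[
\|R\|_F^{2} = \|P_B\|_F^{2}\,\|E_A\|_F^{2} + \|P_A\|_F^{2}\,\|E_B\|_F^{2} + 2\,\langle E_A,P_A\rangle_F\,\langle P_B,E_B\rangle_F.
\]
Because $P_A,P_B$ are (sub)permutation matrices, $\|P_A\|_F^{2}$ and $\|P_B\|_F^{2}$ are just the fixed positive counts of matched pairs, so the first two terms are positive-weighted copies of the two single-factor objectives. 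Minimizing them decouples precisely into the two smaller prolongation problems of Theorem~\ref{theorem:spectralzero}, one per factor; in particular, whenever each factor admits a zero-cost matching ($E_A=0$ and $E_B=0$), the product residual $R$ vanishes as well.

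The main obstacle is the cross term $2\,\langle E_A,P_A\rangle_F\,\langle P_B,E_B\rangle_F$. I would analyze it by computing $\langle E_A,P_A\rangle_F=\operatorname{tr}(\Lambda_A^{c}P_A^{\top}P_A)-\operatorname{tr}(P_A^{\top}\Lambda_A^{f}P_A)$; for column-orthonormal $P_A$ this is the difference between the coarse eigenvalue sum and the sum of the matched fine eigenvalues, which does \emph{not} vanish for an arbitrary matching. The saving structure is that each cross-term factor depends on only \emph{one} of $P_A,P_B$, so $\|R\|_F^{2}$ is a separable function of the pairs $(\|E_A\|_F^{2},\langle E_A,P_A\rangle_F)$ and $(\|E_B\|_F^{2},\langle P_B,E_B\rangle_F)$; the product objective therefore remains a function of the two factor problems alone, which is the precise sense in which it decouples. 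I expect the cleanest completion to note that at the zero-cost matchings guaranteed by Theorem~\ref{theorem:spectralzero} one has $E_A=E_B=0$, so the cross term is automatically zero and the independently chosen factor minimizers remain global minimizers of the full product objective.
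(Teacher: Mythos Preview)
Your Kronecker decomposition $R = E_A\otimes P_B + P_A\otimes E_B$ is exactly right and matches the paper's underlying computation. The gap is in what you do next: you square the Frobenius norm and are then forced to confront the cross term $2\langle E_A,P_A\rangle_F\langle P_B,E_B\rangle_F$, which, as you correctly note, does not vanish in general. Your attempted rescue---that the squared objective is a ``separable function'' of factor-dependent quantities---is too weak: separability of the \emph{arguments} does not imply that the minimization decouples when those arguments are coupled multiplicatively through a cross term of indeterminate sign. And the fallback to the zero-cost case ($E_A=E_B=0$) proves only a very special instance, not the theorem. (Your repeated references to Theorem~\ref{theorem:spectralzero} are also misplaced: that result is about existence of zero-cost matchings, not about the decoupling claimed here.)

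The paper avoids all of this by working with the \emph{unsquared} norm and applying the triangle inequality directly to your decomposition:
\[
\|R\|_F \;\le\; \|E_A\otimes P_B\|_F + \|P_A\otimes E_B\|_F \;=\; \|P_B\|_F\,\|E_A\|_F + \|P_A\|_F\,\|E_B\|_F,
\]
using $\|X\otimes Y\|_F=\|X\|_F\|Y\|_F$. Since $\|P_A\|_F$ and $\|P_B\|_F$ are fixed constants for column-orthonormal prolongations, the right-hand side is already a weighted sum of the two single-factor spectral objectives, and that upper bound is exactly the sense in which the problem ``decouples.'' In the paper this step is in fact outsourced: the spatial-domain decoupling theorem from the main manuscript supplies the inequality, and the only new observation is that each Frobenius norm $\|PL-LP\|_F$ equals its eigenspace counterpart $\|\tilde P\Lambda-\Lambda\tilde P\|_F$ under the orthogonal change of basis, so the same bound holds verbatim in the spectral formulation.
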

\begin{proof}
From Theorem \ref{thm:decompthm} of the main manuscript, we know that for 
\begin{align*}
\mathbf{G}_\Box^{(1)} = G^{(1)}_1 \Box G^{(2)}_1 \qquad & \text{and} \qquad \mathbf{G}_\Box^{(2)} = G^{(1)}_2 \Box G^{(2)}_2 ,
\end{align*}
and assuming $P = P_1 \otimes P_2$,
\begin{align*}
D^{P,\alpha} \left( G_\Box^{(1)},  G_\Box^{(2)} \right) &= \left| \left| P L_\Box^{(1)} - L_\Box^{(2)} P \right| \right|_F\\
&\leq \sqrt{n_2^{(1)}} D_{P_1,\alpha} \left( G_1^{(1)},  G_1^{(2)} \right) + \sqrt{n_1^{(1)}} D_{P_2,\alpha} \left( G_2^{(1)},  G_2^{(2)} \right) \\
&= \sqrt{n_2^{(1)}} \left| \left| \frac{1}{\sqrt{\alpha}} P_1 L_1^{(1)}  - \sqrt{\alpha} L_1^{(2)}  P_1 \right| \right|_F \\
& \qquad + 
\sqrt{n_1^{(1)}} \left| \left| \frac{1}{\sqrt{\alpha}} P_2  L_2^{(1)} - \sqrt{\alpha} L_2^{(2)} P_2 \right| \right|_F , \\
\end{align*}
Trivially, we can rewrite each of these Frobenius norms to be their spectral version, as in Equation \ref{eqn:distancedefn}. Thus,
\begin{align*}
\left| \left| P L_\Box^{(1)} - L_\Box^{(2)} P \right| \right|_F &= \left| \left| \tilde{P} \Lambda_\Box^{(1)} - \Lambda_\Box^{(2)} \tilde{P} \right| \right|_F \\
&\leq \sqrt{n_2^{(1)}} \left| \left| \frac{1}{\sqrt{\alpha}} \tilde{P}_1 \Lambda_1^{(1)}  - \sqrt{\alpha} \Lambda_1^{(2)}  \tilde{P}_1 \right| \right|_F \\
& \qquad + 
\sqrt{n_1^{(1)}} \left| \left| \frac{1}{\sqrt{\alpha}} \tilde{P}_2  \Lambda_2^{(1)} - \sqrt{\alpha} \Lambda_2^{(2)} \tilde{P}_2 \right| \right|_F , \\
\end{align*}
which is a weighted sum of objectives of the two spectral prolongation problems for the two factor lineages. We have thus also decoupled this eigenvalue-matching version of the objective function into two separate prolongation problems.
\end{proof}
Finally, we show that if $\tilde{P}_1$ and $\tilde{P}_2$ are solutions to the eigenvalue matching problem $m*(L_1^{(1)}, L_1^{(2)})$ and $m*(L_2^{(1)}, L_2^{(2)})$ respectively, then $\tilde{P} = \tilde{P}_1 \otimes \tilde{P}_2$ is a valid, but not necessarily optimal, solution to the eigenvalue matching problem $m*(L_\Box^{(1)}, L_\Box^{(2)})$. By valid we mean that $\tilde{P}$ satisfies the constraints given in the definition of matching problems in Section \ref{subsub:definitions}. 
\begin{proof}
This fact follows directly from the constraints on $\tilde{P}_1$ and $\tilde{P}_2$. A matrix $M$ is a valid matching matrix iff its entries are in $\{ 0, 1\}$ and it is orthogonal (this is an equivalent expression of the constraints given in Section \ref{subsub:definitions}. If $\tilde{P}_1$ and $\tilde{P}_2$ are both orthogonal and $\{0,1\}$-valued, then we observe the following facts about their Kronecker product $\tilde{P}_1 \otimes \tilde{P}_2$:
\begin{itemize}
\item it is also $\{0,1\}$-valued, since any of its entries is the product of an entry of $\tilde{P}_1$ and one of  $\tilde{P}_2$.
\item it is orthogonal, since $A \otimes B$ is orthogonal iff $A^T A = \zeta I$ and $B^T B = \frac{1}{\zeta} I$ for some $\zeta > 0$ \cite{laub2005matrix}. $\tilde{P}_1$ and $\tilde{P}_2$ satisfy these conditions with $\zeta = 1$.
\end{itemize}
\begin{sloppypar}
\noindent So $\tilde{P}_1 \otimes \tilde{P}_2$ satisfies the constraints given for the eigenvalue matching problem ${m*(L_\Box^{(1)}, L_\Box^{(2)})}$.
\end{sloppypar}
\end{proof}

\section{Comparison of Numerical Methods}
\label{sec:sup:comparison}

To find minima of this objective function, we explore several numerical methods. For prototyping, we initially used Nelder-Mead \cite{nelder1965simplex} optimization with explicit orthogonality constraints, as implemented in the Mathematica commercial computer algebra program. However, this approach does not scale - in our hands Mathematica was not able to minimize this objective function with more than approximately $200$ unknowns in a reasonable amount of time. Our next approach was to use a special-purpose code \cite{wen2013feasible} for orthogonally-constrained gradient descent. While this software package scaled well to pairs of large graphs, it required many random restarts to find minima of our objective function. Motivated by its automatic differentiation capability and its ability to handle larger numbers of unknowns, we tried the TensorFlow minimization package \cite{abadi2016tensorflow}: first custom-written code and then a package called PyManOpt \cite{townsend2016pymanopt} which performs manifold-constrained optimization of arbitrary objective functions expressed as TensorFlow computation graphs. PyManOpt is able to perform first- and second-order minimization while staying within the constraint manifold (rather than our custom code, which takes gradient descent steps and then projects back to the constraint surface).  These latter two approaches performed best in terms of optimization solution quality, and we compare them more throughly below.

To compare the performance of the TensorFlow method and the PyManOpt method, we explore the performance of both minimization methods as the relative weight $s$ of the locality and diffusion terms is adjusted. Figure \ref{fig:paretofig} shows the tradeoff plot of the optimized unweighted value of each term as the weight parameter $s$ is tuned. The four subplots correspond to four runs of this experiment with differing sizes of graphs; in each we find optimal prolongations from a cycle graph of size $n$ to one of size $2n$. The PyManOpt-based minimization code is clearly superior, as we see a clear linear tradeoff between objective function terms as a function of $s$. The TensorFlow code which maintains orthogonality by projecting back to the Stiefel manifold falls short of this boundary in all cases. Therefore unless otherwise specified, for the rest of this paper when we discuss solving for $P$ matrices, we are reporting results of using the PyManOpt method. 

\begin{figure}
\label{fig:paretofig}
\begin{center}
\includegraphics[width=.85\linewidth]{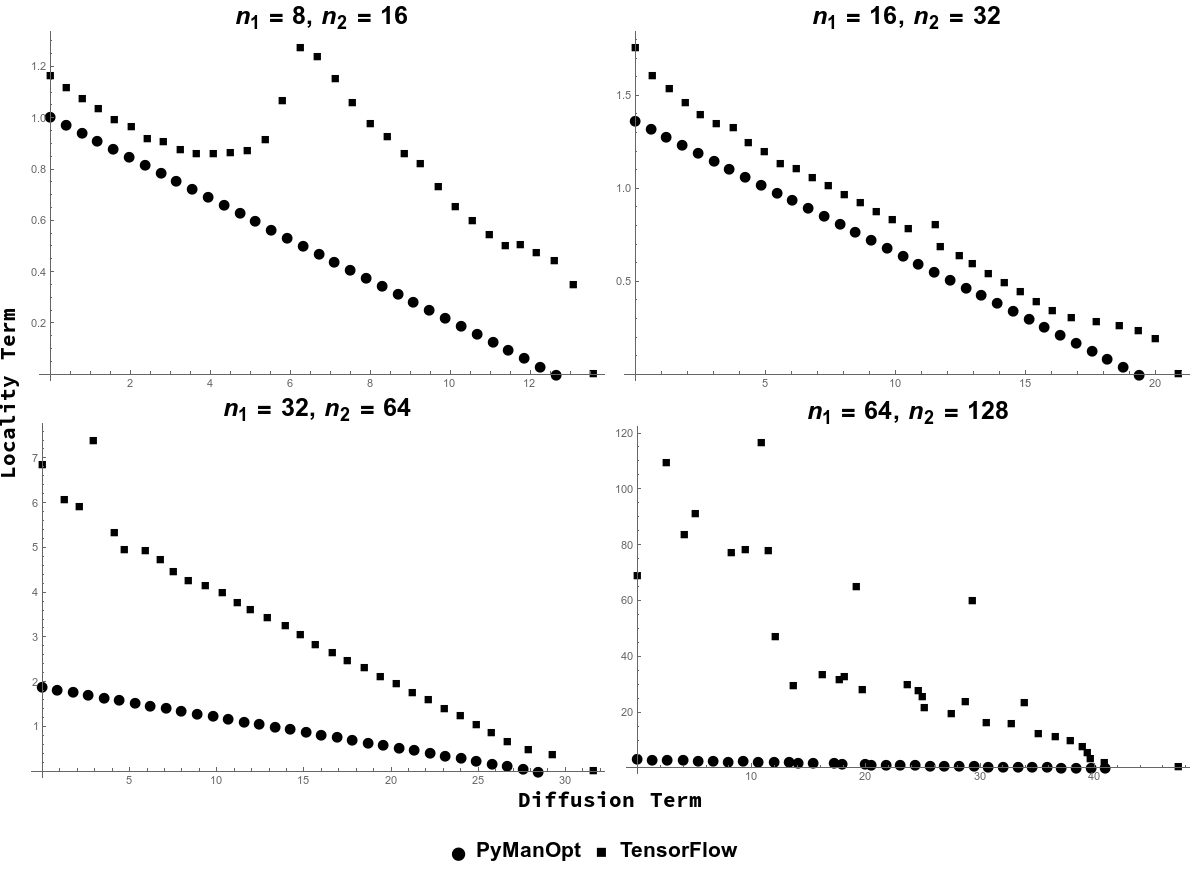}
\end{center}
\caption{Tradeoff plot of locality vs. diffusion for several pairs of graphs. Multiple solutions are plotted in each subplot, representing the adjustment of the $s$ parameter in our objective function from totally local to totally diffuse. We see that the PyManOpt boundary shows a linear tradeoff between the two terms of the objective function as their relative weight is tuned, whereas the Tensorflow boundary is more irregular. Furthermore, the PyManOpt method in general finds optima with lower objective function value than Tensorflow (for both objectives). We note that Nelder-Mead in Mathematica would not be able to tackle problems of this size, and the method due to Wen and Yin \cite{wen2013feasible} produced points which are off of this plot by at least an order of magnitude (we do not present these points).} 
\end{figure}

\section{Code}

An implementation (in TensorFlow) of the MsANN training procedure is available at 
\url{https://www.ics.uci.edu/~scottcb/research/MsANN/MsANN.html}, or via GitHub at \url{https://github.com/scottcb/MsANN}. Contact C.B. Scott with any questions about this implementation. 

\newpage
\bibliographystyle{siamplain}
\bibliography{references}